\let\originalleft\left
\let\originalright\right
\renewcommand{\left}{\mathopen{}\mathclose\bgroup\originalleft}
\renewcommand{\right}{\aftergroup\egroup\originalright}
\theoremstyle{plain}
\newtheorem*{result}{Result}
\newtheorem*{ass}{Assumption}
\newtheorem{lemma}{Lemma}[section]
\theoremstyle{definition}
\newtheorem{defn}{Definition}[section]
\theoremstyle{remark}
\newtheorem{remark}{Remark}[section]
\newtheorem*{remark*}{Remark}
\newcommand{\E}{\mathbb{E}}
\newcommand{\R}{\mathbb{R}}
\renewcommand{\vec}[1]{\boldsymbol{#1}}
\newcommand{\hvec}[1]{\hat{\vec{#1}}}
\newcommand{\1}{\mathds{1}}
\DeclareMathOperator*{\argmax}{argmax}
\DeclareMathOperator{\sgn}{sgn}
\DeclareMathOperator{\diag}{diag}
\DeclareMathOperator{\tr}{Tr}
\newcommand{\iid}{\stackrel{i.i.d}{\sim}}
\newcommand{\ra}{\rightarrow}
\newcommand{\round}[1]{\left( #1 \right)}
\newcommand{\bround}[1]{\Big( #1 \Big)}
\newcommand{\br}[1]{\left\{ #1 \right\}}
\newcommand{\bbr}[1]{\Big\{ #1 \Big\}}
\newcommand{\bro}[1]{\{ #1 \}}
\newcommand{\norm}[1]{\left \Vert #1 \right \Vert}
\newcommand{\normo}[1]{\Vert #1 \Vert}
\newcommand{\innero}[1]{\langle #1 \rangle}
\newcommand{\ddd}{,\dots,}
\newcommand{\EE}[1]{\E\left[ #1 \right]}
\newcommand{\pmat}[1]{\begin{pmatrix} #1 \end{pmatrix}}
\newcommand{\nn}{\nonumber \\}
\begin{document}
\runningtitle{Semi-supervised MTL on Gaussian mixture}

\twocolumn[

\aistatstitle{Asymptotic Bayes risk of semi-supervised multitask learning\\ on Gaussian mixture}

\aistatsauthor{ Minh-Toan Nguyen \And Romain Couillet }

\aistatsaddress{ GIPSA-lab, Université Grenoble Alpes \And  LIG-lab, Université Grenoble Alpes} ]

\begin{abstract}
  The article considers semi-supervised multitask learning on a Gaussian mixture model (GMM). Using methods from statistical physics, we compute the asymptotic Bayes risk of each task in the regime of large datasets in high dimension, from which we analyze the role of task similarity in learning and evaluate the performance gain when tasks are learned together rather than separately. In the supervised case, we derive a simple algorithm that attains the Bayes optimal performance.
\end{abstract}
\section{INTRODUCTION}
Multitask learning (MTL) is a machine learning method in which multiple tasks are learned simultaneously. It can facilitate knowledge transfer between tasks and can lead to more informative data representation \citep{ruder2017overview}. Although learning from related tasks can help disseminate useful information learned from one task to other tasks, the presence of unrelated tasks can also be beneficial. With the prior knowledge that two given tasks are unrelated, the algorithm can learn to ignore irrelevant features of the data distribution, resulting in better data representation \citep{paredes2012exploiting}.

In this work, we propose a simple model of MTL based on Gaussian mixtures that focuses on capturing the transfer of knowledge between tasks, leaving out the data representation aspect. Our paper extends the semi-supervised learning model studied in \cite{lelarge2019asymptotic}, which examines the added value of unlabeled data in a one-task classification. We consider here instead multiple classification tasks, for which the data in each task are partially labeled and come from two classes. Thanks to the simplicity of our model, we can define the correlation between two tasks as a number in $ [-1,1]$. We are interested in the performance gain when correlated tasks are learned together versus when they are learned separately, assuming the best algorithm is used. This leads to the concept of \emph{Bayes risk}, defined as the minimal feasible probability of misclassifying a new data point not from the training dataset. Despite the randomness of data, in the limit where both the quantity and the dimensionality of the data are large with a fixed ratio, the Bayes risk converges towards a deterministic value.

Although the main objective of this study is to compute the minimum classification error, it is important to emphasize that the posterior distribution of a signal given the observed data is a more fundamental object, as it serves as a basis for deriving optimal estimators with respect to certain criteria. In the high-dimensional regime, the posterior law of a signal is a high-dimensional integral, and despite its complexity, it behaves like a simpler law. This property enables the exact calculations obtained in this work.

\textbf{Contributions and related works.}

As a first contribution, we derive an exact formula for the asymptotic Bayesian risk, based on a simple argument that is similar to the cavity method from statistical physics \citep{mezard2009information}. Although not fully rigorous, the paper aims to provide a clear intuition of the asymptotic equivalence that occurs in high dimensions. This concept underlies most of the equations presented in the paper. The paper is designed to be accessible, and no prior knowledge of physics is required to understand its contents. Our work aligns with a body of research that studies the fundamental limit of various high-dimensional statistical models, including tensor models \citep{barbier2017layered, lesieur2017statistical, lelarge2019fundamental}, generalized linear model \citep{barbier2019optimal} and Gaussian mixture model \citep{lesieur2016phase,lelarge2019asymptotic}. 

Secondly, we analyze the role of task correlations and how they interact with other elements of the model, such as the proportion of labeled data in each task. It is well known that unsupervised learning on a single task with Gaussian mixture data leads to a phase transition that separates the high and low noise regimes. We demonstrate that phase transition persists to the case of multitask and study how it is affected by task correlations. In the context of source task - target task, we identify the conditions in which the source task is most beneficial to the target task.

Finally, we derive a simple algorithm that achieves the optimal performance in the case of supervised learning. Although an optimal performance on a synthetic data set does not necessarily have a good performance on real data, this algorithm shows how the optimal algorithms on separate tasks should be modified when correlations are taken into account. This could offer useful insights for designing MTL methods in practice.

Although our focus is different, there is some connection between our work and theoretical studies that investigate optimization-based inference on simple data models. These studies compute the exact asymptotic performance of algorithms, and examine how this performance is influenced by factors such as choice of loss function, regularization, and number of model parameters. On the other hand, our work focuses on investigating the fundamental limit of statistical problems regardless of any specific algorithm. It is interesting, however, that in some cases, the optimization-based methods can nearly reach or achieve the optimal performance \citep{mai2021consistent,thrampoulidis2020theoretical,mignacco2020role, loureiro2021learning, aubin2020generalization}. For multitask learning on Gaussian mixtures, \cite{tiomoko2021deciphering} obtain exact asymptotic results for least-square support vector
machine using random matrix theory.

There are several reasons to study the GMM. Besides being amenable to theoretical analysis, it is the simplest model that captures the elements of MTL that we are interested in: task correlation and transferring of information. On the application aspect, it is remarked in \cite{lesieur2016phase} that the Bayesian statistics under GMM as a prior can rediscover several key methods in machine learning such as the K-means or spectral clustering algorithms. There exists a close relationship between Bayesian statistics and algorithms. On one hand, Bayesian interpretations can be established for well-known algorithms such as PCA and SVM \citep{tipping1999probabilistic, polson2011data}, which turn out to be standard estimators on fairly simple data models. On the other hand, by starting with a simple data model and devising an optimal algorithm based on specific criteria, one can enhance existing methods or create new ones.
\citep{bishop1998bayesian,krzakala2012statistical}.

\textbf{Notation:} We use the symbol $\innero{\cdot, \cdot}$ to denote the scalar (or inner) product of vectors. If $\vec{X} = (X_{ij})$, then $\vec{X}_{i \cdot} = (X_{ij})_j$ and $\vec{X}_{\cdot j} = (X_{ij})_i$. For $n \in \mathbb{N}$, we use $[n]$ to denote the set $\br{1,2,\dots,n}$. The notation $\vec D_{\vec{x}}$ represents the diagonal matrix with diagonal elements given by the vector $\vec{x}$. If indexed objects such as $\vec X_i$ are given, then $\vec X$ simply means $(\vec X_i)_i$.

The source code for the simulations in this paper is available at: \href{https://github.com/Minh-Toan/Bayes-risk}{https://github.com/Minh-Toan/Bayes-risk}

\section{MODEL}\label{sec:intro}
We consider $T$ classification tasks, where task $ t$ consists of $ N_t$ data points in $ \R^D$. The $ i$-th data point in task $ t$, denoted by $\vec Y_{ti}$, is given by
\begin{align}\label{eq:model-2}
\vec Y_{ti} = V_{ti} \vec U_{t} +\sigma_t \vec Z_{ti}
\end{align}
where $\sigma_t >0$. The random variables $\vec V, \vec U, \vec Z $ are independent, with
\begin{align*}
V_{ti} &\iid \mathcal U(\br{-1,1}), \\
Z_{ti} &\iid \mathcal N(0, I_D),
\end{align*}
and $ \vec U_1 \ddd \vec U_T$ are chosen uniformly randomly on the unit sphere $S^{D-1} = \br{\vec x \in \R^D, \norm{\vec x}=1} $, conditioned on the event
\begin{align*}
\innero{\vec U_t, \vec U_{t'}}=C_{tt'}, t \neq t'.
\end{align*}
The matrix $\vec C= (\innero{\vec U_t, \vec U_{t'}})_{t,t'=1}^T$ is called the \emph{task-correlation matrix}. It follows from the definition that $\vec C$ is a positive definite matrix with diagonal entries all equal to $1$. The tasks are said to be \emph{connected} if for any two tasks $t$ and $t'$, there is a sequence of tasks $t_1 \ddd t_k$ such that $C_{tt_1}, C_{t_1 t_2} \dots $ $C_{t_k t'} \neq 0$. 

In other words, the data in task $ t$ comes from two classes corresponding to two Gaussian distributions centered at $ \pm \vec U_t$ with the same covariance $ \sigma_t^2 I_D$. The positions of the centers are not known and can only be estimated from the data. The class of a data point $ \vec Y_{ti}$ is indicated by $ V_{ti}$, so each data point has probability $ 1/2$ of belonging to each class. A data point is said to be \emph{labeled} if we know which class it belongs to, otherwise it is \emph{unlabeled}. Independently of all other random variables, each data point in task $ t$ is labeled with probability $ \eta_t$. The cases $\eta_t = 1$ and $\eta_t = 0$ correspond to supervised and unsupervised learning. $ C_{tt'}$ measures the correlation between tasks $ t$ and $ t'$. The parameters $ \lambda_t = 1/\sigma_t^2$ are called the \emph{signal to noise ratio} (SNR). As the SNR increases, the two classes separate and classification is easier. We study the model in the setting where the dimension and the amount of data in each task tends to infinity at a fixed rate $\alpha_t = \lim_{D \ra \infty} N_t/D$, called the \emph{sampling ratio}. Note that the model for semi-supervised learning studied in \cite{lelarge2019asymptotic} corresponds to the case $T=1$.

We have access to the dataset $ \vec Y = (\vec Y_{ti})$, the labels as well as model parameters $ (\sigma_t), (\eta_t), (\alpha_t) $ and $ \vec C $. \footnote{ $\vec \sigma$ and $\vec C$ can indeed be estimated with vanishing errors as $ D \ra \infty$, given that a positive fraction of labeled data is available in each task, i.e. $ \eta_t > 0$ for all $ t$ (Appendix \ref{C}).} Our job is to use that available information to classify a new data point $ \vec Y_{\text{new}}$ in any given task $ t$
\begin{align}
    \vec Y_{\text{new}} = V_{\text{new}} \vec U_t + \sigma_t \vec Z_{\text{new}}
\end{align}

We are interested in the minimal classification error, i.e. the Bayes risk
\begin{align}
    \inf_{\hat V} \mathbb P(\hat V \neq V_{\text{new}})
\end{align}
where the infimum is taken over all estimators of $ V_{\text{new}}$.
\section{RESULTS}
Before presenting the results, we need some definitions that will aid in formulating our findings in a clear and concise manner.
\begin{defn}
The inference of $\vec X \in \R^D$ from the data $\vec Y$ satisfies the \emph{replica symmetric} (RS) property with \emph{overlap} $q$ if in the limit $D \ra \infty$, 
\begin{align}
    \innero{\vec X, \vec X^1}, \, \innero{\vec X^1, \vec X^2}, \, \innero{\vec X, \hat{\vec X}}, \, \normo{\hat{\vec X}}^2
\end{align}
all converge to the same limit $q$, where $\vec X^1, \vec X^2$ are sampled independently from the posterior of $\vec X$ given $\vec Y$, and $\hat{\vec X} = \E[\vec X|\vec Y]$, called the \emph{MMSE estimator} of $\vec X$ given $\vec Y$. \footnote{MMSE stands for minimum mean-squared error.} In some contexts, we use $\hvec X$ to refer to a general estimator of $\vec X$, while the MMSE estimator of $\vec X$ is denoted as $\hvec X_{\text{MMSE}}$.
\end{defn}

This property holds for a wide range of inference problems in the setting where the signal is generated from a known distribution. We assume that this property holds true for our model:
\begin{ass}
$\sigma_t^{-1} \vec U_t|\vec Y$ and $N_t^{-1/2} \vec V_t|\vec Y$ satisfies the RS property for all $t \in [T]$ with the overlaps denoted by $q_{ut}$ and $q_{vt}$ respectively.
\end{ass}
The inclusion of the normalizing factor $\sigma_t$ in the definition of $q_{ut}$ is for the purpose of convenience. 

Later in the paper, we will require the following definition in order to prove the results:
\begin{defn}
Consider the following Gaussian channels
\begin{align}
    Y_i = \sqrt{\lambda_i} X_i + Z_i, \quad i = 1 \ddd n
\end{align}
with inputs $X_i$, outputs $Y_i$ and SNRs $\lambda_i$. Let $\hvec X = \E[\vec X|\vec Y]$. The \emph{overlap functions} $F_{\vec X,i}: \R^n \ra \R$ are defined as 
\begin{align}
    F_{\vec X,i}(\vec \lambda) = \E[ \hat X_i X_i ]= \E[\hat X_i^2]
\end{align}
$F_{\vec X,i}$ is also referred to as the \emph{overlap} of the signal $X_i$.
\end{defn}

The main result of the article unfolds as follows.
\begin{result} \label{theo:main}
i) Under the setting of the model, as $ D \ra \infty$, the Bayes risk converges to
\begin{align*}
   1-\Phi \round{\sqrt{q_{ut}}},
\end{align*}
where $\Phi(t)=\frac1{\sqrt{2\pi}}\int_{-\infty}^t e^{-x^2}dx$ 

ii) The overlaps $q_{ut}, q_{vt}$ satisfies the following equations
\begin{subequations}
    \begin{align}
    q_{ut} &= [\vec M-\vec M(\vec I+\vec D\vec M)^{-1}]_{tt} \label{eq:fixed-point-1}\\
    q_{vt} &= \eta_t + (1-\eta_t)F(q_{ut}) \label{eq:fixed-point-2}
    \end{align}
\end{subequations}
with
\begin{align*}
    \vec M &= \br{ C_{tt'}/\sigma_t \sigma_{t'} }_{t,t'=1}^T\\
    \vec D &= \diag \bro{ \alpha_t q_{vt}}_{t=1}^T \\
    F(q) &= \E[\tanh(\sqrt{q}Z + q)], \quad Z \sim \mathcal N(0,1).
\end{align*}

\end{result}
\begin{remark}
When $q_{ut} = 0$, the Bayes risk of task $t$ is equal to $0.5$, which corresponds to the level of classification error of a random guess. In this case, we say that the classification of task $t$ is \emph{impossible}. On the other hand, if $q_{ut}$ is positive, the classification of task $t$ is said to be \emph{feasible}.
\end{remark}

\begin{remark}
The fixed point equations (\ref{eq:fixed-point-1}) and (\ref{eq:fixed-point-2}) may not uniquely determine the overlaps. Specifically, for unsupervised learning with high SNR, two solutions exist: the zero solution is unstable while the non-zero solution is stable, and the stable solution is naturally chosen as overlaps. In other cases, there is only one solution.
\end{remark}

We can perform a sanity check of the result by considering the following special cases: if the similarity between any two tasks is zero, the result implies that MTL has the same asymptotic Bayes risks as learning task separately, which is obvious since the data from different tasks are independent, while if $ \sigma_t = \sigma$ and $ C_{tt'}=1$ for all $ t,t'$, i.e. the data distributions are identical for all tasks, the asymptotic Bayes risks of all tasks are equal to that of a single task with parameters  $ \alpha = \sum_t \alpha_t$ and~$ \alpha \eta = \sum_t \alpha_t \eta_t $ (Appendix \ref{special}).

\section{CONSEQUENCES}
We present in this section some implications of the main result.
\subsection{Supervised learning.} For supervised learning with only one task, the minimal classification error of a new data point $ \vec Y_{\text{new}}$ is achieved by the estimator $ \hat{V}_{\text{new}} = \sgn(\innero{\vec Y_{\text{new}}, \bar{\vec Y}})$, where $ \bar{\vec Y} = N^{-1} \sum_i V_i \vec Y_i$ \citep{lelarge2019asymptotic}. In the multitask case, if $ \vec Y_{\text{new}}$ is a new data point in task $ t$, the following algorithm achieves the optimal performance:
\begin{enumerate}
    \item Compute
    \begin{align*}
        \bar{\vec Y}_t = \frac{1}{N_t} \sum_{i=1}^{N_t} V_{ti} \vec Y_{ti}
    \end{align*}
    \item Compute
    \begin{align*}
        \tilde{\vec Y}_t = \sum_{s=1}^T a_{ts} \bar{\vec Y}_s
    \end{align*}
    where $ \vec A = (a_{ts})_{t,s=1}^T = \vec M\vec D_{\vec \alpha}(\vec I+\vec M\vec D_{\vec \alpha})^{-1}$.
    \item  The asymptotic Bayes risk is achieved by
    \begin{align} \label{eq:optimal}
        \hat{V}_{\text{new}} = \sgn(\innero{ \vec Y, \tilde{\vec Y}_t }).
    \end{align}
\end{enumerate}
We can see that the optimal estimator for multiple tasks modifies the optimal estimators for separated tasks $\bar{\vec Y}_t$ by taking into account the correlations between tasks as well as their levels of difficulty and the relative sizes, measured by $ \vec C, (\sigma_t)$ and $ (\alpha_t)$ respectively. Interestingly, this optimal algorithm coincides with the method proposed in \cite{tiomoko2021pca} using a different approach.
\begin{figure}[!htb]
    \centering
    \includegraphics[width=0.45\textwidth]{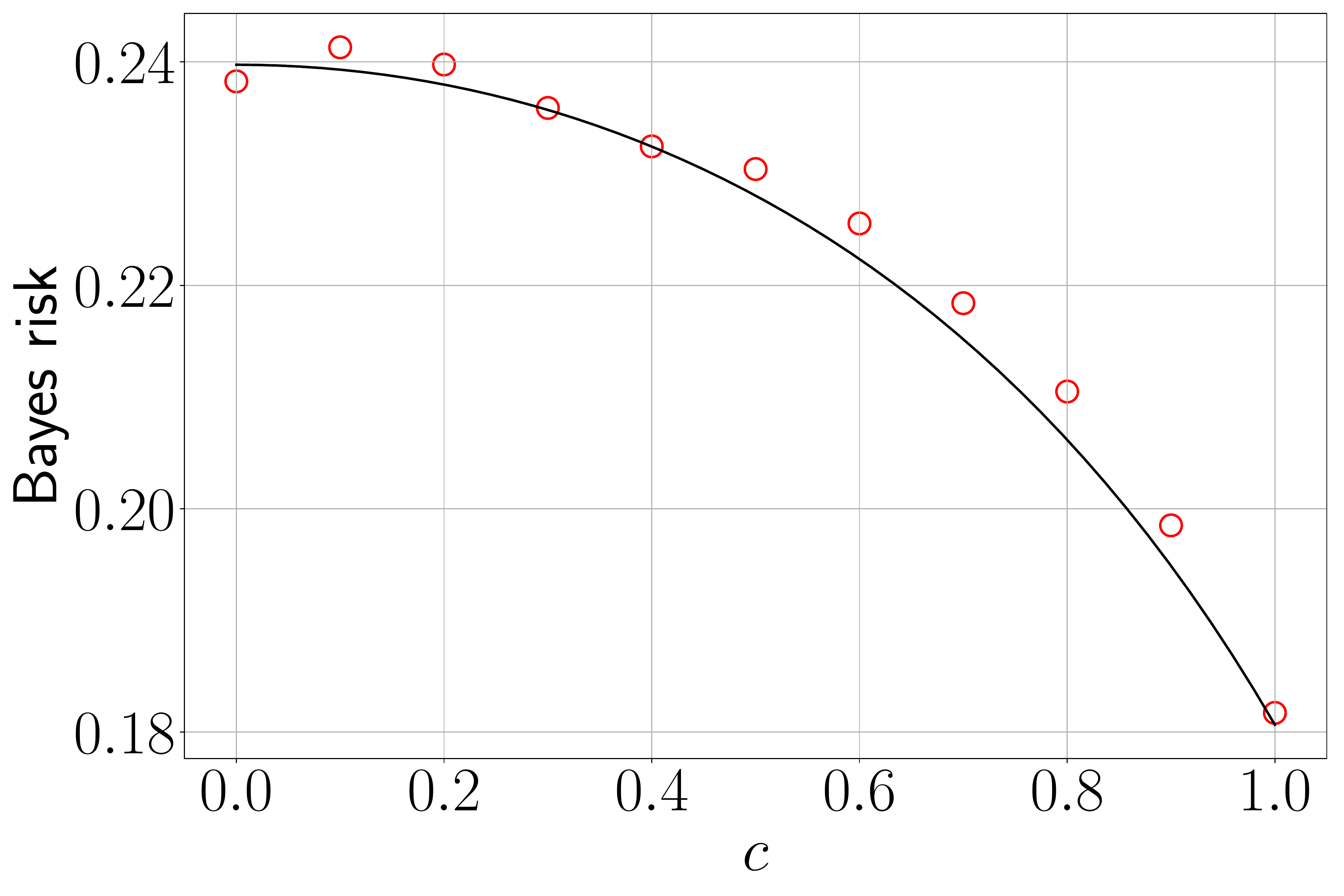}
    \caption{Bayes risk vs performance of the asymptotic optimal algorithm. $\alpha_1 = \alpha_2 = 1$, $\sigma_1 = 1$, $\sigma_2=0.5$, $D=1000$.}
    \label{fig:optimal}
\end{figure}
\medskip

\subsection{Unsupervised learning and phase transition.} A particularly interesting behavior that only occurs in the case of unsupervised learning is phase transition. One of the most well-known example of this phenomenon is \emph{BBP phase transition} \citep{baik2005phase} which concerns a single learning task with $\lim_{D \ra \infty} N/D=1$. When $ \lambda = 1/\sigma^2 \leq 1$, no estimator can achieve a smaller classification error than $ 0.5$. In other words, the classification is objectively impossible since the two classes are statistically identical. On the other hand, we say that a task is \emph{feasible} if one can obtain a classification error smaller than $ 0.5$. It turns out that phase transition persists to the case of multitask. Fig. \ref{fig:phase} shows the performance of task 1 in terms of SNRs in the case of two tasks with $ N_1=N_2=D$ and correlation $ c=0.7$. The classification is impossible in the region delimited by the black curve.
\begin{figure}[!htb]
    \centering
    \includegraphics[width=0.45\textwidth]{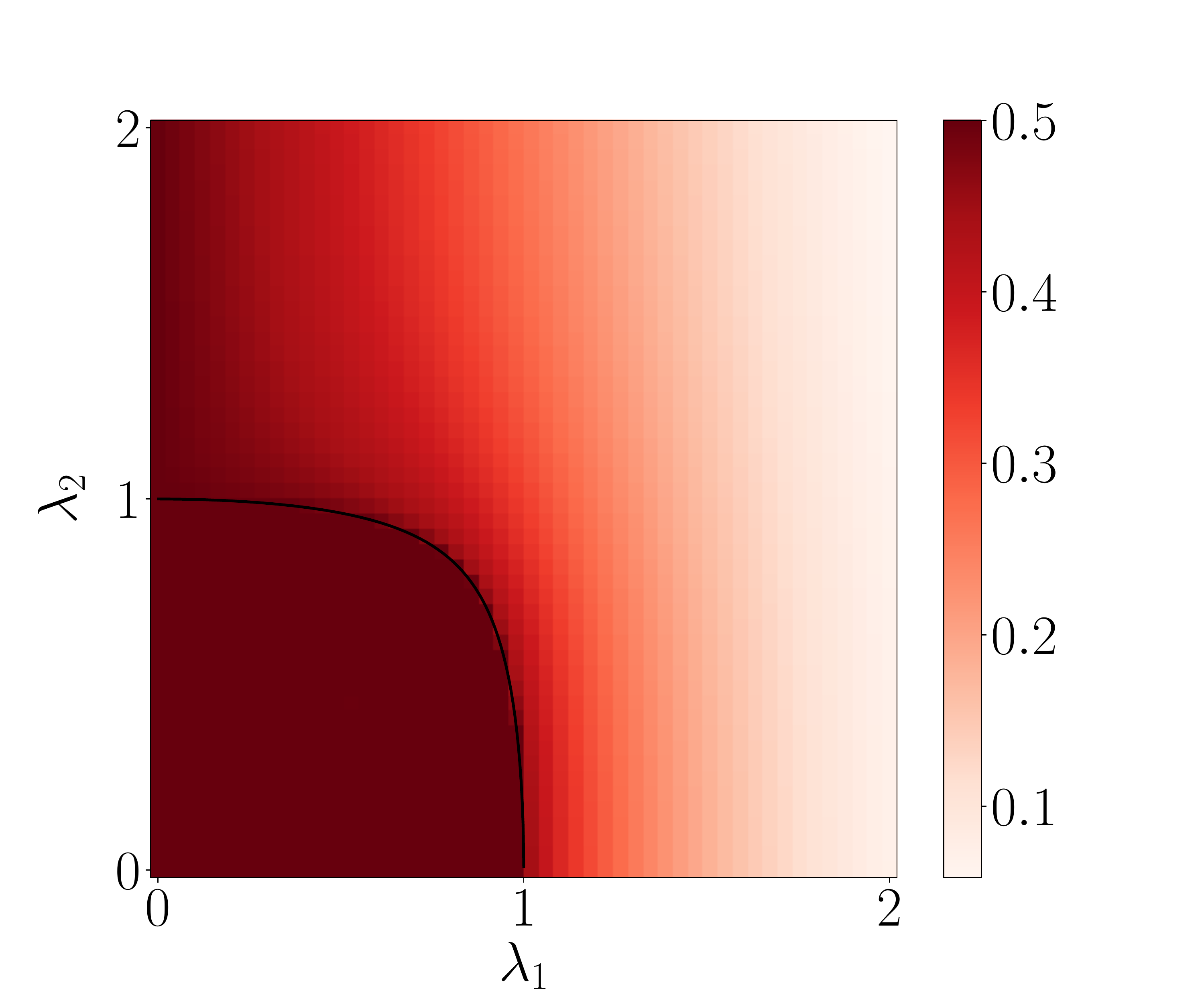}
    \caption{Bayes risk of Task 1 in terms of SNR of each task. Two tasks are unsupervised, with $N_1=N_2=D$ and correlation $ c=0.7$. The classification is impossible in the region delimited by the black curve. The impossible region is identical for two tasks.}
    \label{fig:phase}
\end{figure}

The simulation also shows that the impossible regions are identical for both tasks. In other words, two correlated tasks are either feasible or impossible. In the general case with any number of tasks, tasks are feasible or impossible together, given that they are connected.

Note that phase transition disappears as soon as a positive proportion of labeled data is available, since supervised learning restricted on labeled data already produces a non-trivial performance.

In the case of two tasks with $ N_1=N_2=D$, the region of impossible classification is given by
\begin{align}
    \bbr{ (\lambda_1, \lambda_2) \in [0,1]^2: (1-\lambda_1^2)(1-\lambda_2^2) \geq c^4 \lambda_1^2 \lambda_2^2 }
\end{align}
as shown in Figure~\ref{fig:regions}. As the task correlation $ c$ increases from $ 0$ to $ 1$, this region shrinks from the unit square $ [0,1]^2$ to a quarter of a disk.

\begin{figure}[!htb]
    \centering
    \includegraphics[width=0.45\textwidth]{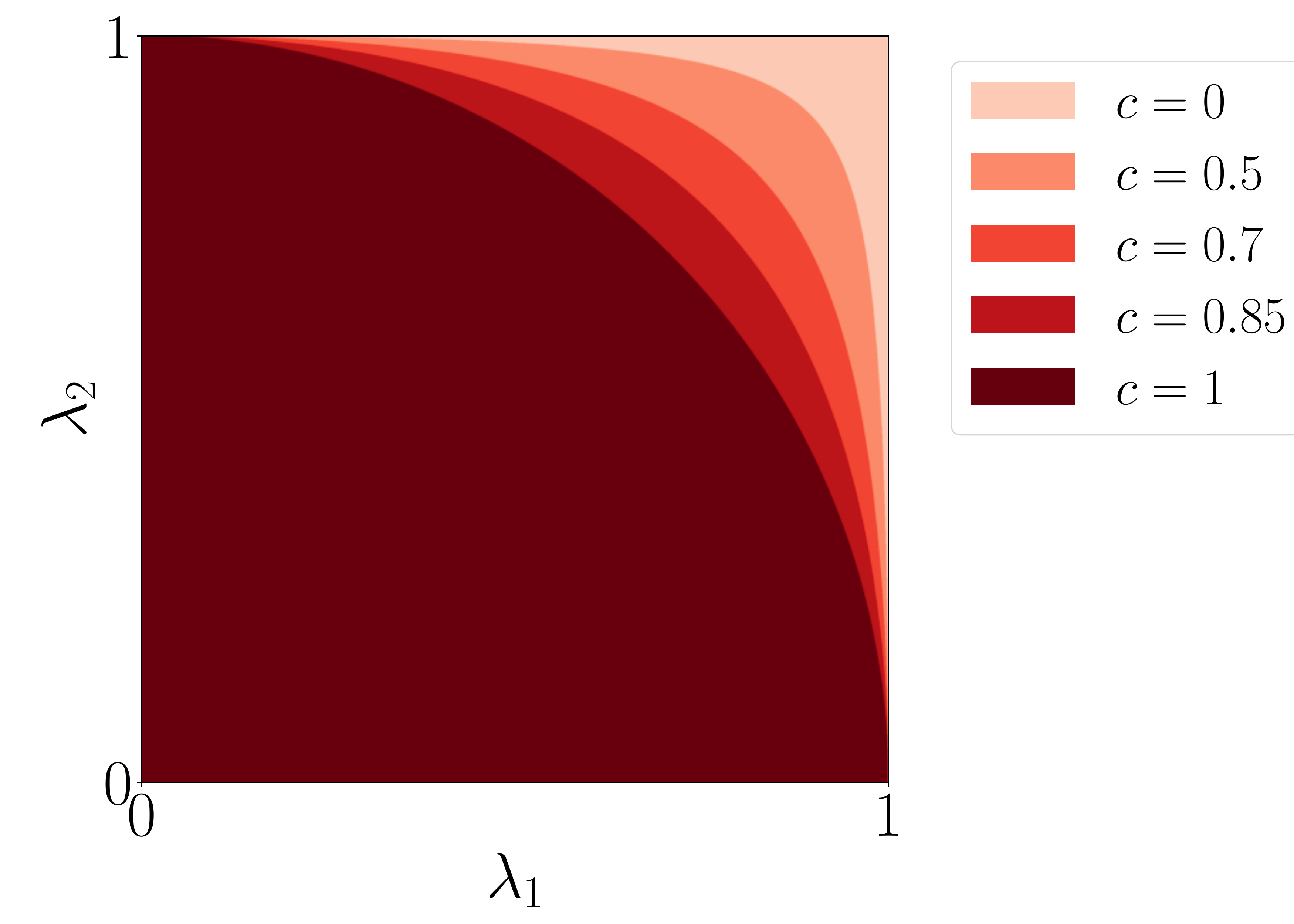}
    \caption{The region of impossible classification shrinks as the task correlation increases. When two tasks are uncorrelated ($c=0$), the region of impossible classification is the whole square $ [0,1]^2$. As $ c$ increases from $0$ to $1$, the impossible region shrinks from the unit square $ [0,1]^2$ to a quarter of a disk.}
    \label{fig:regions}
\end{figure}

Another special case where an explicit formula for the impossible region can be obtained is when there are $ T$ tasks with $ N_1 = \dots = N_T = D$, with correlation $ c > 0$ between any two of them, and $ \lambda_t = \lambda$ for all $ t$. It can be shown that the classification is impossible whenever
\begin{align}
    \lambda \leq \frac{1}{\sqrt{1+(T-1)c^2}}.
\end{align}

\subsection{Semi-supervised learning.} To reduce the number of model parameters in the simulation, we here focus on a specific setting consisting of one \emph{source task} and one \emph{target task}. The source task is comparatively easy: it can be fully labeled, have a high SNR, or have a larger dataset. We want to see how the target task benefits from the source task. 

Figure~\ref{fig:2-tasks} illustrates the effect of task correlation. The task correlation $ c$ ranges from $ 0$ to $ 1$. Note that the correlations $c$ and $ -c$ are essentially the same, since one can be transformed to another by switching labels in one task. The first task (target task) is composed of a small dataset ($ \alpha_1 = 0.1$) without label ($ \eta_1 = 0$), while the second task (source task) consists of a fully labeled dataset ($ \eta_2 = 1$) with twice as much data ($ \alpha_2 = 0.2$). If two tasks are highly correlated ($ c \gtrsim 0.5$), the performance of the target task can be significantly improved. When $ c$ is near zero, the decrease in Bayes risk is slow, in order of $ O(c^2)$. Note that two tasks have the same SNR ($\lambda_1=\lambda_2=4$), so when $c =1$ they have the same data distribution and can be combined into a single task, yielding a identical performance.
\begin{figure}[!htb]
    \centering
    \includegraphics[width=0.45\textwidth]{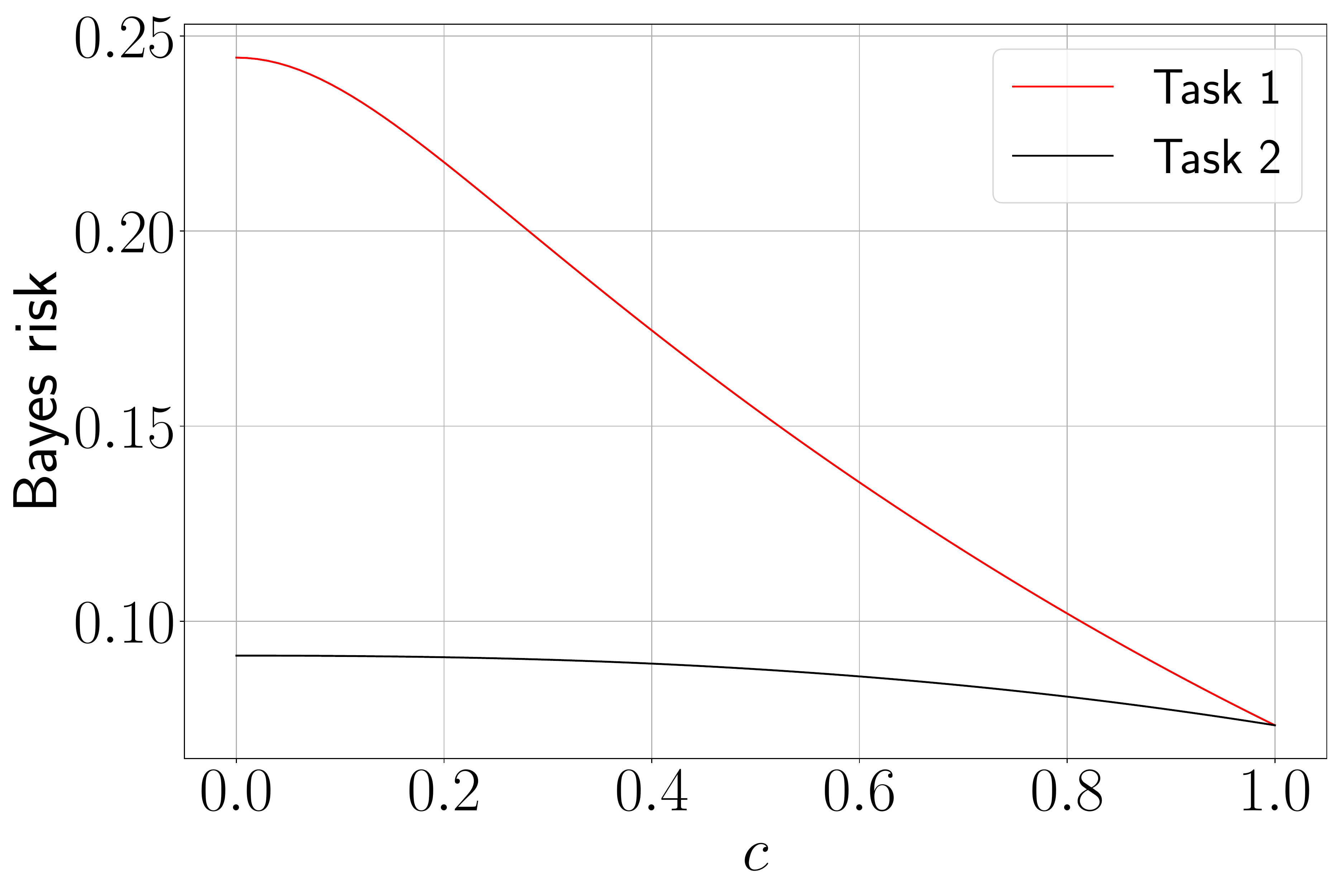}
    \caption{Two-task setting: Bayes risks as a function of the task correlation $c$, with proportions of labeled data $ \eta_1 = 0$, $\eta_2=1$, oversampling ratios $ \alpha_1=0.1$, $\alpha_2 = 0.2$ and SNRs $\lambda_1 = \lambda_2 = 4$. When two tasks are highly correlated ($c \gtrsim 0.5$), the performance of task 1 is significantly improved.}
    \label{fig:2-tasks}
\end{figure}

 In Figure~\ref{fig:improve}, we compute the rate of error reduction in the target task as a result of transferring information from the source task. We found that MTL is most effective when the SNR of the target task is near the phase transition and is smaller than that of the source task, while the proportion of labeled data is low. 
\begin{figure}[!htb]
    \centering
    \includegraphics[width=0.45\textwidth]{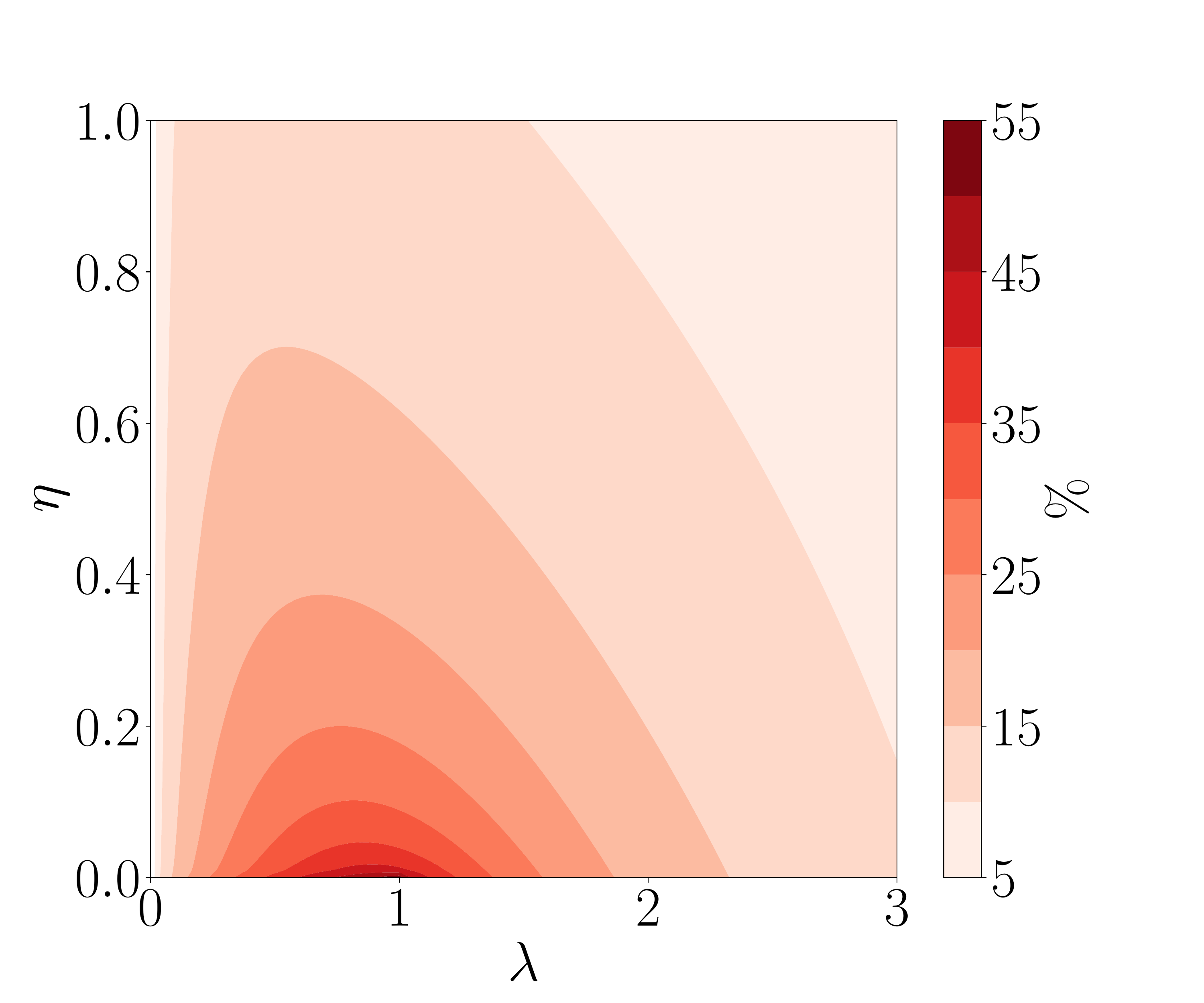}
    \caption{Percentage of reduction of Bayes risk in term of SNR and proportion of labeled data of the target task, with parameters  $c=0.8$, $N_1 = N_2 = D$, $ \lambda_1=2$, $ 0 \leq \lambda_2 \leq 3$, $ \eta_1=1$, $ 0 \leq \eta_2 \leq 1$. }
    \label{fig:improve}
\end{figure}

Intuitively, there are three reasons for this. Firstly, the labeled data from the target task is more valuable than that of source task, even in this case where two tasks are highly correlated ($c=0.8$). This leads to lower gain when the proportion of labeled data in the target task is high. Secondly, if the source task is more difficult than the target task, i.e. the SNR is higher in the target task, then the source task is not very useful. Finally, near the phase transition where the target task struggles, labeled data from the source task can offer valuable help.

\medskip


\section{CAVITY ARGUMENT}\label{cavity}
The various equations obtained in the paper are underpinned by the phenomenon of asymptotic equivalence that occurs in the high-dimensional limit. In this limit, a fairly complicated statistical model decouples into independent components, and the inference can be performed separately in each component. This decoupling phenomenon is proven using the so-called \emph{cavity method}. The following lemma plays a crucial role in the cavity argument presented in this paper:
\begin{lemma}\label{lem}
    Suppose we want to estimate  the signal $X \in \R$ with prior $P_X$ from the data $\vec Y$ that can be split into two parts as follows. The first part, denoted by $\vec Y^x$, consists of the following observation on $X$,
    \begin{align}
        \vec Y^x = X \vec U + \vec Z,
    \end{align}
    where
    \begin{itemize}
        \item $\vec U \in R^D$ is unknown with prior $P_U$,
        \item  $\vec Z \sim \mathcal N(0, I_{D})$,
        \item $X$, $\vec U$ and $\vec Z$ are independent.
    \end{itemize}
    The second dataset, denoted by $\vec Y^u$, is independent of $\vec X$. Suppose that the law $\vec U | \vec Y^u$ has the RS property with overlap $q$.  Then in the limit $D \ra \infty$,
    
    i) The posterior of $\vec X$ given $\vec Y$ is asymptotically equivalent to the law $\bar P$ defined as
    \begin{align}
    \frac{d \bar P(x | \vec Y)}{dP_X (x)}  \propto \exp \bround{ x \innero{\vec Y^x, \hat{\vec U}} - \frac{1}{2} q x^2  }
    \end{align}
    where $\hvec U = \E[\vec U|\vec Y]$. As a consequence, the statistics $S = \innero{\vec Y^x, \hvec U}$ is asymptotically sufficient for estimating $\vec X$ from $\vec Y$.

    ii) $S/\sqrt{q}$ converges in law to $\sqrt{q} X + \xi$, where $\xi$ follows standard normal distribution and is independent of $X$. As a result, estimating $X$ from $\vec Y$ is asymptotically equivalent to estimating $X$ from the output of a Gaussian channel with SNR $q$.
\end{lemma}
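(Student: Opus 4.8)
The plan is to compute the posterior of $X$ given $\vec Y$ in closed form, show that it collapses to $\bar P$, and then read off the Gaussian–channel statement. Since $\vec Y^u$ is independent of $X$, its likelihood does not involve $x$, and since $\vec Y^x = x\vec u + \vec Z$ is a Gaussian channel, Bayes' rule gives for the joint posterior
\[
dP(x,\vec u\,|\,\vec Y)\ \propto\ dP_X(x)\,dP(\vec u\,|\,\vec Y^u)\,\exp\!\round{x\innero{\vec Y^x,\vec u}-\tfrac12 x^2\normo{\vec u}^2},
\]
with $dP(\vec u\,|\,\vec Y^u)\propto dP_U(\vec u)\,p(\vec Y^u\,|\,\vec u)$. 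Integrating out $\vec u$,
\[
dP(x\,|\,\vec Y)\ \propto\ dP_X(x)\,Z(x),\qquad Z(x):=\int dP(\vec u\,|\,\vec Y^u)\,\exp\!\round{x\innero{\vec Y^x,\vec u}-\tfrac12 x^2\normo{\vec u}^2}.
\]
So everything reduces to evaluating $\log Z(x)$ as $D\to\infty$.

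This is where the RS property enters. Put $W(\vec u)=\innero{\vec Y^x,\vec u}$, with posterior mean $S:=\int W\,dP(\vec u\,|\,\vec Y^u)=\innero{\vec Y^x,\hvec U}$ where $\hvec U=\E[\vec U\,|\,\vec Y^u]$ (identified asymptotically with $\E[\vec U\,|\,\vec Y]$, since $\vec Y^x$ is a single weak observation of $\vec U$). First I would argue, from concentration of overlaps, that under $dP(\vec u\,|\,\vec Y^u)$ the self-overlap $\normo{\vec u}^2$ concentrates at a constant $\rho$ and $W$ obeys a conditional central limit theorem with mean $S$ and some variance $v$. A short computation pins down $v$: writing $\vec Y^x=X\vec U+\vec Z$ with $\vec Z\perp\vec Y^u$, the quadratic form $v=\innero{\vec Y^x,\mathrm{Cov}(\vec u)\,\vec Y^x}$ (covariance taken under $dP(\vec u\,|\,\vec Y^u)$) has its $\vec Z\vec Z$–part averaging over $\vec Z$, and concentrating, to $\tr\,\mathrm{Cov}(\vec u)=\int\normo{\vec u}^2dP(\vec u\,|\,\vec Y^u)-\normo{\hvec U}^2\to\rho-q$, its $\vec U\vec U$–part tending to $q^2-q^2=0$ (by $\innero{\vec U,\vec u}\to q$ and $\innero{\vec U,\hvec U}\to q$), and its cross term vanishing; hence $v\to\rho-q$. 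Substituting the Gaussian approximation into $Z(x)$,
\[
\log Z(x)\ \longrightarrow\ \round{xS-\tfrac12 x^2\rho}+\tfrac12 x^2 v\ =\ xS-\tfrac12 q x^2,
\]
so the deterministic ``reaction'' term $-\tfrac12 x^2\normo{\vec u}^2$ is neutralised exactly by the posterior fluctuations of $W$, leaving $dP(x\,|\,\vec Y)\propto dP_X(x)\exp\!\round{xS-\tfrac12 q x^2}$, i.e.\ $\bar P$. Since this depends on the data only through the scalar $S$, part (i) and the asymptotic sufficiency of $S$ are immediate.

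For part (ii) I would decompose $S$ along the same split, $X$ and $\vec U$ now being the true signal and vector:
\[
S=\innero{\vec Y^x,\hvec U}=X\,\innero{\vec U,\hvec U}+\innero{\vec Z,\hvec U}.
\]
By the RS property $\innero{\vec U,\hvec U}\to q$; and since $\vec Z\sim\mathcal N(0,I_D)$ is independent of $\vec Y^u$, conditionally on $\vec Y^u$ we have $\innero{\vec Z,\hvec U}\sim\mathcal N\!\round{0,\normo{\hvec U}^2}$ with $\normo{\hvec U}^2\to q$, so $\innero{\vec Z,\hvec U}$ converges in law to $\sqrt q\,\xi$ with $\xi$ standard normal and independent of $X$. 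Hence $S\to qX+\sqrt q\,\xi$ jointly with $X$, i.e.\ $S/\sqrt q\to\sqrt q\,X+\xi$ in distribution. Together with the sufficiency of $S$ from part (i), this shows that estimating $X$ from $\vec Y$ is asymptotically the same problem as estimating $X$ from the output $\sqrt q\,X+\xi$ of a Gaussian channel of SNR $q$.

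The crux, and the only genuinely delicate point, is the middle step: extracting from the RS assumption both the conditional central limit theorem for $\innero{\vec Y^x,\vec u}$ under $dP(\vec u\,|\,\vec Y^u)$ and the exact identity $v=\rho-q$. This is the non-rigorous heart of the cavity argument; making it precise would require, e.g., concentration of the free energy together with control of the posterior covariance operator, as well as a stability estimate justifying the identification of $\E[\vec U\,|\,\vec Y]$ with $\E[\vec U\,|\,\vec Y^u]$. Everything else is elementary Gaussian algebra.
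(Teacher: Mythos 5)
Your proof is correct at the paper's (non-rigorous, cavity-level) standard, and part (ii) is identical to the paper's argument, but your derivation of part (i) takes a genuinely different route. The paper proves the equivalence $\mathcal A \simeq \mathcal B$ between $\mathcal A = \int dP(\vec u|\vec Y^u)\exp(x\innero{\vec Y^x,\vec u}-\tfrac12 x^2\normo{\vec u}^2)$ and $\mathcal B=\exp(xS-\tfrac12 qx^2)$ by a second-moment (replica) computation: it shows $\E[(\mathcal A-\mathcal B)^2]\to 0$ by checking that $\E[\mathcal A^2]$, $\E[\mathcal A\mathcal B]$, $\E[\mathcal B^2]$ all converge to $\E\exp(2qXx+qx^2)$ after substituting $\vec Y^x=X\vec U+\vec Z$ and integrating out $\vec Z$; the self-overlap terms $\normo{\vec U^a}^2$ cancel exactly inside the Gaussian integral, so only the four overlap limits in the RS definition are needed. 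You instead evaluate $\log Z(x)$ directly via a conditional CLT for the cavity field $W(\vec u)=\innero{\vec Y^x,\vec u}$ under $dP(\vec u|\vec Y^u)$, together with concentration of $\normo{\vec u}^2$ at $\rho$, and obtain the cancellation $-\tfrac12\rho x^2+\tfrac12(\rho-q)x^2=-\tfrac12 qx^2$ from the identity $v=\rho-q$. Your version makes the mechanism more transparent (the reaction term is neutralised by the posterior fluctuations of the cavity field, a Nishimori-type cancellation), but it leans on strictly stronger unproven inputs: a conditional CLT and concentration of the self-overlap and of $\innero{\vec U,\vec u}$ under the posterior, none of which appear in the paper's RS definition, whereas the paper's $L^2$ computation needs only the stated overlap limits (plus uniform integrability to exchange limit and expectation). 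You correctly flag these as the delicate points, and you also flag the identification of $\E[\vec U|\vec Y]$ with $\E[\vec U|\vec Y^u]$, which the paper likewise glosses over. Both are legitimate incarnations of the cavity argument; the paper's is the tighter one relative to its own assumptions.
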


\begin{proof}
Since $ X$ is independent of $ \vec U$ and $ \vec Y^u$, we have
\begin{align*}
    \frac{d P(x | \vec Y)}{dP_X (x)} &= \int d P(\vec u|\vec Y^u) P(x| \vec u, \vec Y^x) \\
    &\propto \int d P(\vec u|\vec Y^u) \exp \bround{ x \innero{\vec Y^x, \vec u} - \frac{1}{2} x^2 \normo{\vec u}^2  } \\
    &:=\mathcal A
\end{align*}
Define 
\begin{align}
    \mathcal B = \exp \bround{ x \innero{\vec Y^x, \hat{\vec U}} - \frac{1}{2} q x^2  }
\end{align}
To prove (i), we will show that $\E[(\mathcal A - \mathcal B)^2] \ra 0$ in the high-dimensional limit $D \ra \infty$ for any value of $x$. To do this, it is sufficient to show that $\E[\mathcal A^2]$, $\E[\mathcal B^2]$ and $\E[\mathcal A \mathcal B]$ converge to the same limit, using the RS property of  $\vec U|\vec Y^u$. Indeed, $\E[\mathcal A^2]$ can be written as
\begin{align*}
    &\E \exp \bround{ \sum_{a=1}^2 x \innero{\vec Y^x, \vec U^a} - \frac{1}{2} x^2 \normo{\vec U^a}^2  }
\end{align*}
where $\vec U^1, \vec U^2$ are sampled independently from $\vec U|\vec Y^u$. Substituting $\vec Y^x = X \vec U + \vec Z$ into the previous expression, we obtain
\begin{align*}
    &\E \exp \bround{ \sum_{a=1}^2 xX \innero{\vec U, \vec U^a} + x \innero{\vec Z, \vec U^a} - \frac{1}{2} x^2 \normo{\vec U^a}^2 }
\end{align*}
Taking the expectation over $\vec Z$ and using the fact that $ \E[e^{\innero{\vec a, \vec Z}}] = e^{\frac{1}{2} \normo{\vec a}^2}$, we have
\begin{align*}
    \E[\mathcal A^2] = \E \exp \bround{ \sum_{a=1}^2 xX \innero{\vec U, \vec U^a} + x^2 \innero{\vec U^1, \vec U^2} }
\end{align*}
It follows from RS property of $\vec U|\vec Y^u$ that
\begin{align}\label{kq}
    \lim_{D \ra \infty} \E[\mathcal A^2] = \E \exp \round{ 2qXx + qx^2 }
\end{align}
To calculate the limits of $\mathbb{E}[\mathcal{AB}]$ and $\mathbb{E}[\mathcal{B}^2]$, we follow exactly the same procedure, which involves substituting the definition of $\vec{Y}^x$, taking the expectation over $\vec{Z}$, and using the RS property. This leads us to the same limit as (\ref{kq}), thereby proving (i).

It follows immediately from the asymptotic equivalence between $P(x|\vec Y)$ and $\bar P(x|\vec Y)$ that the statistics $\innero{\vec Y^x, \hvec U}$ is asymptotically sufficient for estimating $X$ from $\vec Y$. This means that all of the relevant information about $X$ can be extracted from $\innero{\vec Y^x, \hvec U}$ instead of from $\vec Y$, without any loss of information in high dimensional limit.

Now we have
\begin{align*}
    \innero{\vec Y^x, \hvec U} = \innero{X \vec U + \vec Z, \hvec U} = X \innero{\vec U, \hvec U} + \innero{\vec Z, \hvec U}.
\end{align*}
Given that $\innero{\vec Z, \hvec U} \sim \mathcal N(0, \normo{\hvec U}^2)$ and $\vec Z$ is independent of $X$, in the limit $D \rightarrow \infty$, this inner product converges in distribution to $\sqrt{q} \xi$, where $\xi$ is a standard normal random variable independent of $\vec X$. Therefore
\begin{align*}
    \frac{\innero{\vec Y^x, \hvec U}}{\sqrt q} \stackrel{d}{\longrightarrow} \sqrt{q} X + \xi, \quad D \ra \infty,
\end{align*}
which proves (ii) since the left hand side of the last expression is also a sufficient statistics of $X$ given $\vec Y$.
\end{proof}

To give an application of Lemma \ref{lem} and to familiarize readers with the cavity argument before delving into the proof of the main results in the paper, we will analyze the following tensor model studied in \cite{miolane2017fundamental}. Our goal is to estimate the signals $\vec{U}$ and $\vec{V}$ from the following observations:
\begin{align}
    Y_{ij} = \sqrt{\frac{\lambda}{N}} U_i V_j + Z_{ij}, i \in [N_u], j \in [N_v]
\end{align}
Here, we assume that $U_i \iid P_U, V_j \iid P_V$ and  the noises $Z_{ij}$ follow independent standard Gaussian distributions for all $i,j$. We study the model in the limit as $N, N_u, N_v$ tend to infinity with fixed ratios $ N_u/N \ra \alpha_u $ and $ N_v/N \ra \alpha_v$. Furthermore, we assume that $\vec U, \vec V$ and $\vec Z = (Z_{ij})$ are independent. It can be shown that both $N_u^{-1/2}\vec U|\vec Y$ and $N_v^{-1/2} \vec V|\vec Y$ satisfies the replica symmetry property, with overlaps $q_u$ and $q_v$ respectively. We will use Lemma \ref{lem} to derive the fixed point equations that satisfied by $q_u, q_v$.

Let $i \in [N_{u}]$ be fixed. The cavity method involves dividing the data $\vec{Y}$ into two parts. The first part, denoted as $\vec{Y}^1$, includes the observations related to $U_i$, given by
\begin{align} \label{fl}
    \vec Y_{i \cdot} = \sqrt{\frac{\lambda}{N}} U_i \vec V + \vec Z_{i \cdot}
\end{align}
while the remaining data is denoted as $\vec{Y}^2$. Since the dataset $\vec Y^1$ only contains an insignificant amount of information relevant to $\vec V$ (one can see that by comparing the sizes of $\vec Y^1$ and $\vec Y^2$), estimating $\vec V $ from $ \vec Y$ is essentially the same as estimating $\vec V$ from  $\vec Y^2$. Therefore, $N_v^{-1/2}\vec V|Y^2$ also satisfies the RS property with overlap $q_v$. It is easy to check that the Lemma \ref{lem} is applicable for this model, with $U_i $ and $\sqrt{\lambda/N} \vec V$ respectively playing the role of $X $ and $\vec U$ in the lemma. As a result, estimating $U_i$ from $\vec Y$ is asymptotically equivalent to estimating the signal $U_i$ from the output of a Gaussian channel with SNR $\lambda \alpha_v q_v$.

For distinct $i, k \in N_u$, since $\vec Z_{i\cdot}$ and $\vec Z_{k \cdot}$ are independent, it can be seen from the proof of Lemma \ref{lem}-ii that the noises $\xi_i$ and $\xi_{k}$ of the equivalent Gaussian channels associated with $U_i, U_k$ are independent. Therefore $\hvec U_i$, which depends on $\xi_i$ and $U_i$, are asymptotically independent for all $i$. By the law of large number
\begin{align}
    q_u = \lim_{N_u \ra \infty} \frac{1}{N_u} \sum_{i=1}^{N_u} \hat U_i^2 = F_U(\lambda  q_v)
\end{align}
where $F_U$ is the overlap function of the Gaussian channel with signal $U$.
Repeating the same argument for $V_j$ with $j \in N_v$, we obtain the fixed point equations for $q_u, q_v$:
\begin{align*}
    q_u = F_U(\lambda \alpha_v q_v) \\
    q_v = F_V(\lambda \alpha_u q_u)
\end{align*}
Note that fixed point equations may not uniquely determine overlaps, as they can have multiple solutions. However, rigorous methods (\cite{barbier2019adaptive}) demonstrate that overlaps can be uniquely determined as the minimax point of a certain function.

\section{PROOFS}

\subsection{Fixed point equations}
\textbf{Reformulation as a tensor model.} Let $ \tilde{\vec U}_t = \sqrt{D}\vec U_t$, it is shown in Appendix \ref{prior} that in the limit $ D \ra \infty$, $ \tilde U_{tj}$ are asymptotically Gaussian with covariance
\begin{align}\label{re}
    \E[\tilde U_{tj} \tilde U_{t'j'}] = C_{tt'}\delta_{jj'}
\end{align}

Let $ \vec W_t = \sqrt{D}\vec U_t/\sigma_t$, the original model can be written as a collection of one-dimensional Gaussian channels
\begin{align}\label{eq:channels}
Y_{ijt} = \frac{1}{ \sqrt{D}} V_{ti} W_{tj} + Z_{tij} 
\end{align}
for $ 1 \leq t \leq T, 1 \leq i \leq N_t, 1 \leq j \leq D$. As $ D \ra \infty$, the random variables $ W_{tj}$ are asymptotically Gaussian with covariance
\begin{align}
    \E[W_{tj}W_{t'j'}] = M_{tt'} \delta_{jj'}
\end{align}
where $M_{tt'} = C_{tt'}/(\sigma_t \sigma_{t'})$.

Next, the information conveyed by the labels can be absorbed into the prior distribution of $\vec V$. Specifically, if the value of $V_{ti}$ is unknown, then its prior remains uniform over $\br{-1,1}$. Otherwise, if it is known that $V_{ti}=1$, then the prior of $V_{ti}$ is $\delta(v-1)$. Note that in this case, the posterior coincides with the prior.

The RS property of $\sigma_t^{-2} \vec U_t|\vec Y$ implies that $D^{-1/2}\vec W_t|\vec Y$ also has the RS property with overlap $ \vec q_{ut}$.

In summary, the problem can be cast as a tensor model, whereby the objective is to estimate the signals $\vec V_t$ and $\vec W_t$ based on prior information regarding these vectors and noisy observations of the tensor products $\vec V_t \otimes \vec W_t$.

\textbf{Cavity argument.}  A crucial step in the analysis is to show that in the high-dimensional limit, estimating $\vec V_t$ and $\vec W_t$ given $\vec Y$ is asymptotically equivalent to estimating the coordinates of these vectors from Gaussian channels with independent noises. The original model is thus equivalent to a much more decoupled model and the inference can be done separately on each channel.

To obtain the fixed point equations, we follow the same approach as the example presented in Section \ref{cavity}. We assume that the proportion of unlabeled data is positive in any task. By taking the limit of these proportions to zero, we can derive the result for the supervised case. Fix $t \in [T]$ and $i \in [N_t]$ such that $V_{ti}$ is unknown. We divide the data $\vec Y$ into two parts: $\vec Y^1$ consisting of the observations concerning $V_{ti}$, namely
\begin{align}
    \vec Y_{ti} = \frac{1}{\sqrt{D}} V_{ti} \vec W_t + \vec Z_{ti} \nonumber
\end{align}
and the remaining data $\vec Y^2$. Since the dataset $\vec Y^1$ only contains an insignificant amount of information relevant to $\vec W_t$, estimating $\vec W_t $ from $ \vec Y$ is essentially the same as estimating $\vec W_t$ from  $\vec Y^2$. Therefore, $D^{-1/2}\vec W_t|\vec Y^2$ also satisfies the RS property with overlap $q_u$. It is easy to check that the Lemma \ref{lem} is applicable, with $V_{ti} $ and $D^{-1/2} \vec W_t$ respectively playing the role of $X $ and $\vec U$ in the lemma. As a result, estimating $V_{ti}$ from $\vec Y$ is asymptotically equivalent to estimating the signal $V_{ti}$ from the output of the Gaussian channel with SNR $ q_{ut}$. For distinct $i, k \in [N_t]$, since $\vec Z_{ti}$ and $\vec Z_{tk}$ are independent, it can be seen from the proof of Lemma \ref{lem}-ii that the noises $\xi_i$ and $\xi_{k}$ of the equivalent Gaussian channels associated with $V_{ti}, V_{tk}$ are also independent. Therefore $V_{ti}$, which depends on $\xi_i$ and $V_{ti}$, are asymptotically independent for all $i$ such that $V_{ti}$ is unlabeled. By the law of large number,
\begin{align}\label{n1}
    r_{vt} &:= \lim_{N_t \ra \infty} \frac{1}{(1-\eta_t) N_t} \sum_{i} \hat V_{ti}^2 \nn
    &= F_v(q_{ut})
\end{align}
where the sum is over all $i \in [N_t]$ such that $V_{ti}$ is unlabeled and $F_v$ is the overlap function of the Gaussian channel with Rademacher signal. From Appendix \ref{ra},
\begin{align}\label{n2}
    F_v(q) &= \E[\tanh(\sqrt{q}Z + q)], \quad Z \sim \mathcal N(0,1).
\end{align}

On the other hand, from the definition of $r_{vt}$, we have
\begin{align} \label{n3}
    q_{vt} = \eta_t + (1-\eta_t)r_{vt}
\end{align}
The fixed point equation (\ref{eq:fixed-point-2}) follows from (\ref{n1}), (\ref{n2}) and (\ref{n3}).

Following exactly the same cavity argument, the estimation of $W_{tj}$ given $ \vec Y$ is asymptotically equivalent the the estimation of the signal $W_{tj}$ from the output of the Gaussian channel with SNR $\alpha_t q_{vt}$. Moreover, the noises corresponding to the signals $W_{tj}$ and $W_{t'j'}$ are asymptotically independent for $(t,j) \neq (t', j')$. When $j \neq j'$, the signals $W_{tj}$ and $W_{t'j'}$ are independent. As a result, the inference on the equivalent Gaussian channels can be performed independently on groups of $T$ scalar Gaussian channels $(W_{tj})_{t=1}^T$. By the law of large number,
\begin{align}
    q_{ut} = \lim_{D \ra \infty} \frac{1}{D} \sum_{j=1}^D \hat W_{tj}^2 = F_{w,t}(\br{\alpha_t q_{vt}}_{t=1}^T)
\end{align}
where $F_{w,t}$ are overlap functions of the Gaussian channel with signal $ \mathcal N(0, \vec M)$. The explicit formula for $F_{w,t}$ are computed in Appendix \ref{gaussian-signals}, which gives the fixed point equation (\ref{eq:fixed-point-1}).

\subsection{Bayes risk and optimal algorithm}
Suppose we want to classify a new data point $ \vec Y_{\text{new}}$ in task $ t$
\begin{align}
    \vec Y_{\text{new}} &= V_{\text{new}} \vec U_t + \sigma_t \vec Z_{\text{new}}    
\end{align}
It is easy to check that Lemma \ref{lem} can be applied to this problem, with $V_{\text{new}}, \vec U_t$ playing the role of $X, \vec U$ in the lemma, as the posterior $\sigma_t^{-1}\vec U_t| \vec Y$ satisfies the RS property with overlap $q_{ut}$. As a result, in high dimensional limit, estimating $ V_{\text{new}}$ given $\vec Y, \vec Y_{\text{new}}$ is essentially the same as estimating the signal $V_{\text{new}}$ from the output of the Gaussian channel with SNR $q_{ut}$. This implies that the minimal classification error of $V_{\text{new}}$ is given by that of the Gaussian channel with Rademacher signal and SNR $q_{ut}$, which is (Appendix \ref{ra})
\begin{align*}
   1-\Phi \round{\sqrt{q_{ut}}},
\end{align*}
According to Lemma \ref{lem}, $S = \innero{\vec Y_{\text{new}}, \hvec U_t}/\sqrt{q_{ut}}$ is sufficient for estimating $V_{\text{new}}$. Moreover, $S$ converges in law to the output of the Gaussian channel with signal $V_{\text{new}}$ and SNR $q_{ut}$. The estimator that minimizes the Bayes risk for this channel is simply $ \sgn(S)$, which leads to the optimal estimator of $V_{\text{new}}$ as $\sgn(\innero{\vec Y_{\text{new}}, \hvec U_t})$. The next step is to determine the value of $\hvec U_t$. We will take advantage of the fact that the vectors $\vec U_t$ are asymptotically Gaussian, so our subsequent argument will rely on the reformulation (\ref{re}) of the model. We will need the following result
\begin{lemma}\label{ll}
The following collection of Gaussian channels 
    \begin{align}
        Y_i = c_i X_i + Z_i, \quad i=1 \ddd n
    \end{align}
    with inputs $X_i$, outputs $Y_i$, SNR $c_i^2$ and independent standard Gaussian noises $Z_i$, is equivalent to a single Gaussian channel with signal $X$, output $ \innero{\vec c, \vec Y}/\normo{\vec c}$ and SNR $ \sum_{i=1}^n c_i^2$. Moreover, 
\end{lemma}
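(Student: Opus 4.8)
The plan is to prove the equivalence in the standard way for Gaussian channels: first show that $S := \innero{\vec c, \vec Y}/\normo{\vec c}$ is a sufficient statistic for the signal, and then compute the law of $S$ to recognize it as the output of a single Gaussian channel. I would begin by writing out the likelihood of $\vec Y$. With the $X_i$ denoting the common scalar signal $X$ (this is what ``signal $X$'' in the statement refers to), the density of $\vec Y$ given $X$ is
\begin{align*}
    p(\vec Y \mid X) &\propto \exp\bround{ -\tfrac12 \sum_{i=1}^n (Y_i - c_i X)^2 } \\
    &= \exp\bround{ -\tfrac12 \normo{\vec Y}^2 }\, \exp\bround{ X\innero{\vec c, \vec Y} - \tfrac12 X^2 \normo{\vec c}^2 }.
\end{align*}
The first factor is free of $X$, and the second depends on $\vec Y$ only through $\innero{\vec c, \vec Y} = \normo{\vec c}\, S$. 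By the Fisher--Neyman factorization criterion, $S$ is therefore sufficient for $X$, so that $p(X \mid \vec Y) = p(X \mid S)$ for every prior on $X$; hence any inference about $X$ from $\vec Y$ may be carried out from $S$ alone, which is the sense in which the two channel models are equivalent.

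Next I would identify the reduced channel explicitly. Substituting $Y_i = c_i X + Z_i$ gives
\begin{align*}
    S = \frac{\innero{\vec c, \vec Y}}{\normo{\vec c}} = \normo{\vec c}\, X + \frac{\innero{\vec c, \vec Z}}{\normo{\vec c}},
\end{align*}
and since $\vec Z \sim \mathcal N(0, I_n)$ is independent of $X$, the variable $\innero{\vec c, \vec Z}/\normo{\vec c}$ is a standard Gaussian independent of $X$. Thus $S = \normo{\vec c}\, X + \tilde Z$ with $\tilde Z \sim \mathcal N(0,1)$, i.e. $S$ is exactly the output of a single Gaussian channel with input $X$ and SNR $\normo{\vec c}^2 = \sum_{i=1}^n c_i^2$. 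Combined with the sufficiency of $S$, this proves the stated equivalence. Whatever additional ``moreover'' clause follows — for instance an explicit formula for $\E[X \mid \vec Y]$, or for the associated overlap — would then be obtained by a direct one-dimensional Gaussian computation on this reduced channel, using the overlap-function machinery introduced earlier.

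I do not expect a genuine obstacle here: the argument is a routine sufficient-statistic computation. The only points requiring care are (i) being explicit that ``equivalence'' means equality of posteriors — hence sufficiency, valid for \emph{all} priors — rather than mere agreement of a particular point estimator, and (ii) if the $X_i$ are intended to be genuinely distinct signals, noting that the reduction would then be to inference about the linear functional $\innero{\vec c, \vec X}/\normo{\vec c}$ rather than about the individual $X_i$; in the intended application the $X_i$ all coincide with a single signal $X$, so this ambiguity does not arise.
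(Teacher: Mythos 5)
Your proposal is correct and follows essentially the same route as the paper: establish sufficiency of $S = \innero{\vec c, \vec Y}/\normo{\vec c}$ (which the paper dismisses as ``straightforward to verify'' and you usefully spell out via Neyman factorization), then rewrite $S = \normo{\vec c}X + \xi$ with $\xi$ standard Gaussian independent of $X$. Your remark that the $X_i$ must all denote the common signal $X$ correctly resolves the slight sloppiness in the lemma's statement.
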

\begin{proof}
It is straightforward to verify that the statistics  $S:= \innero{\vec c, \vec Y}/\normo{\vec c}$ is sufficient  for estimating $X$ from $\vec Y$. Moreover, $ S = \norm{\vec c}  X + \xi$ where $\xi = \norm{\vec c}^{-1} \innero{\vec c, \vec Z}$ is standard Gaussian and independent of $X$. This proves the claim of the lemma.
\end{proof}

\begin{remark} \label{rema}
    From the proof of Lemma \ref{ll} we can also see that the noise $\xi$ of the simplified channel comes from the noises of the original channels.
\end{remark}

The Lemma \ref{ll} implies that, for each $ (t, j)$ fixed, the following Gaussian channels
\begin{align*}
    Y_{tij} = \frac{1}{\sqrt{D}} V_{ti} W_{tj} + Z_{tij}, \quad i = 1 \ddd N_t
\end{align*}
which share the same signal $W_{tj}$, can be simplified into a single Gaussian channel with output $ \sqrt{N_t} \bar{Y}_{tj}$ and SNR $N_t/D\simeq \alpha_t$, where $\bar Y_{tj} $ is the $j$-th coordinate of the vector $\bar{\vec Y}_{t}$ in the algorithm.

For $(t,j)\neq(t',j')$, the noises of the simplified Gaussian channels associated with $W_{tj}$ and $ W_{t' j'}$ are independent, as a consequence of Remark \ref{rema}. Additionally, the signals $W_{tj}$ and $W_{t'j'}$ are independent if $j \neq j'$. Therefore, the inference on the simplified Gaussian channels can be carried out independently on each group of $ T$ channels with signals $(W_{tj})_{t=1}^T$. The MMSE estimator on each of these groups can be computed explicitly as
\begin{align*}
    (\hat{W}_{tj})_{t=1}^T = \vec B (\sqrt{N_t} \bar Y_{tj})_{t=1}^T
\end{align*}
where
\begin{align*}
   \vec B =  \vec M \vec D_{\vec \alpha}^{1/2} (\vec I + \vec D_{\vec \alpha}^{1/2} \vec M \vec D_{\vec \alpha}^{1/2})^{-1}
\end{align*}
(Appendix \ref{gaussian-signals}). Equivalently,
\begin{align*}
    \hat{\vec W}_{t} = \sum_s B_{ts} \sqrt{N_s} \bar{\vec Y}_s
\end{align*}
Dividing both sides by $ \sqrt{D}$ and using $ N_t/D \simeq \alpha_t$, we have
\begin{align}
    \tilde{\vec Y}_t: = \sigma_t^{-1}\hat{\vec U}_{t} \simeq \sum_s A_{ts} \bar{\vec Y}_s 
\end{align}
where $ A_{ts} = B_{ts} \sqrt{\alpha_s}$. Therefore,
\begin{align*}
    \vec A = \vec M\vec D_{\vec \alpha}(\vec I+\vec M\vec D_{\vec \alpha})^{-1}
\end{align*}
as given in the optimal algorithm. The optimal estimator for $V_{\text{new}}$ is $ \sgn(\innero{\vec Y_{\text{new}}, \hvec U_t } ) = \sgn(\innero{\vec Y_{\text{new}}, \tilde{\vec Y}_t })$.

\section{CONCLUSION}
This paper proposed a Gaussian mixture model of multitasking learning, in which each task is a semi-supervised classification problem. We derived an explicit formula for the Bayes risk, from which the behaviors of the model is studied through various numerical simulations. 

The model in this paper concerns with Gaussian and Rademacher random variables. However, our method also works for more general tensor models with random variables of finite second moments. 

\textbf{Acknowledgement.}
We would like to thank the reviewers for their valuable feedback and insightful comments, which have significantly contributed to the improvement of this paper. We would also like to express our appreciation to Malik Tiomoko for insightful discussions on the algorithmic aspects of the model and to Hugues Souchard de Lavoreille for his internship report, which the first author consulted numerous times while working on this paper. Our research is supported by MIAI.

\bibliography{main}

\begin{thebibliography}{23}
\providecommand{\natexlab}[1]{#1}
\providecommand{\url}[1]{\texttt{#1}}
\expandafter\ifx\csname urlstyle\endcsname\relax
  \providecommand{\doi}[1]{doi: #1}\else
  \providecommand{\doi}{doi: \begingroup \urlstyle{rm}\Url}\fi

\bibitem[Aubin et~al.(2020)Aubin, Krzakala, Lu, and
  Zdeborov{\'a}]{aubin2020generalization}
B.~Aubin, F.~Krzakala, Y.~Lu, and L.~Zdeborov{\'a}.
\newblock Generalization error in high-dimensional perceptrons: Approaching
  bayes error with convex optimization.
\newblock \emph{Advances in Neural Information Processing Systems},
  33:\penalty0 12199--12210, 2020.

\bibitem[Baik et~al.(2005)Baik, Arous, and P{\'e}ch{\'e}]{baik2005phase}
J.~Baik, G.~B. Arous, and S.~P{\'e}ch{\'e}.
\newblock Phase transition of the largest eigenvalue for nonnull complex sample
  covariance matrices.
\newblock \emph{The Annals of Probability}, 33\penalty0 (5):\penalty0
  1643--1697, 2005.

\bibitem[Barbier and Macris(2019)]{barbier2019adaptive}
J.~Barbier and N.~Macris.
\newblock The adaptive interpolation method: a simple scheme to prove replica
  formulas in bayesian inference.
\newblock \emph{Probability theory and related fields}, 174:\penalty0
  1133--1185, 2019.

\bibitem[Barbier et~al.(2017)Barbier, Macris, and Miolane]{barbier2017layered}
J.~Barbier, N.~Macris, and L.~Miolane.
\newblock The layered structure of tensor estimation and its mutual
  information.
\newblock In \emph{2017 55th Annual Allerton Conference on Communication,
  Control, and Computing (Allerton)}, pages 1056--1063. IEEE, 2017.

\bibitem[Barbier et~al.(2019)Barbier, Krzakala, Macris, Miolane, and
  Zdeborov{\'a}]{barbier2019optimal}
J.~Barbier, F.~Krzakala, N.~Macris, L.~Miolane, and L.~Zdeborov{\'a}.
\newblock Optimal errors and phase transitions in high-dimensional generalized
  linear models.
\newblock \emph{Proceedings of the National Academy of Sciences}, 116\penalty0
  (12):\penalty0 5451--5460, 2019.

\bibitem[Bishop(1998)]{bishop1998bayesian}
C.~Bishop.
\newblock Bayesian pca.
\newblock \emph{Advances in neural information processing systems}, 11, 1998.

\bibitem[Krzakala et~al.(2012)Krzakala, M{\'e}zard, Sausset, Sun, and
  Zdeborov{\'a}]{krzakala2012statistical}
F.~Krzakala, M.~M{\'e}zard, F.~Sausset, Y.~Sun, and L.~Zdeborov{\'a}.
\newblock Statistical-physics-based reconstruction in compressed sensing.
\newblock \emph{Physical Review X}, 2\penalty0 (2):\penalty0 021005, 2012.

\bibitem[Lelarge and Miolane(2019{\natexlab{a}})]{lelarge2019asymptotic}
M.~Lelarge and L.~Miolane.
\newblock Asymptotic bayes risk for gaussian mixture in a semi-supervised
  setting.
\newblock In \emph{2019 IEEE 8th International Workshop on Computational
  Advances in Multi-Sensor Adaptive Processing (CAMSAP)}, pages 639--643. IEEE,
  2019{\natexlab{a}}.

\bibitem[Lelarge and Miolane(2019{\natexlab{b}})]{lelarge2019fundamental}
M.~Lelarge and L.~Miolane.
\newblock Fundamental limits of symmetric low-rank matrix estimation.
\newblock \emph{Probability Theory and Related Fields}, 173\penalty0
  (3):\penalty0 859--929, 2019{\natexlab{b}}.

\bibitem[Lesieur et~al.(2016)Lesieur, De~Bacco, Banks, Krzakala, Moore, and
  Zdeborov{\'a}]{lesieur2016phase}
T.~Lesieur, C.~De~Bacco, J.~Banks, F.~Krzakala, C.~Moore, and L.~Zdeborov{\'a}.
\newblock Phase transitions and optimal algorithms in high-dimensional gaussian
  mixture clustering.
\newblock In \emph{2016 54th Annual Allerton Conference on Communication,
  Control, and Computing (Allerton)}, pages 601--608. IEEE, 2016.

\bibitem[Lesieur et~al.(2017)Lesieur, Miolane, Lelarge, Krzakala, and
  Zdeborov{\'a}]{lesieur2017statistical}
T.~Lesieur, L.~Miolane, M.~Lelarge, F.~Krzakala, and L.~Zdeborov{\'a}.
\newblock Statistical and computational phase transitions in spiked tensor
  estimation.
\newblock In \emph{2017 IEEE International Symposium on Information Theory
  (ISIT)}, pages 511--515. IEEE, 2017.

\bibitem[Loureiro et~al.(2021)Loureiro, Sicuro, Gerbelot, Pacco, Krzakala, and
  Zdeborov{\'a}]{loureiro2021learning}
B.~Loureiro, G.~Sicuro, C.~Gerbelot, A.~Pacco, F.~Krzakala, and
  L.~Zdeborov{\'a}.
\newblock Learning gaussian mixtures with generalized linear models: Precise
  asymptotics in high-dimensions.
\newblock \emph{Advances in Neural Information Processing Systems},
  34:\penalty0 10144--10157, 2021.

\bibitem[Mai and Couillet(2021)]{mai2021consistent}
X.~Mai and R.~Couillet.
\newblock Consistent semi-supervised graph regularization for high dimensional
  data.
\newblock \emph{J. Mach. Learn. Res.}, 22:\penalty0 94--1, 2021.

\bibitem[Mezard and Montanari(2009)]{mezard2009information}
M.~Mezard and A.~Montanari.
\newblock \emph{Information, physics, and computation}.
\newblock Oxford University Press, 2009.

\bibitem[Mignacco et~al.(2020)Mignacco, Krzakala, Lu, Urbani, and
  Zdeborova]{mignacco2020role}
F.~Mignacco, F.~Krzakala, Y.~Lu, P.~Urbani, and L.~Zdeborova.
\newblock The role of regularization in classification of high-dimensional
  noisy gaussian mixture.
\newblock In \emph{International Conference on Machine Learning}, pages
  6874--6883. PMLR, 2020.

\bibitem[Miolane(2017)]{miolane2017fundamental}
L.~Miolane.
\newblock Fundamental limits of low-rank matrix estimation: the non-symmetric
  case.
\newblock \emph{arXiv preprint arXiv:1702.00473}, 2017.

\bibitem[Paredes et~al.(2012)Paredes, Argyriou, Berthouze, and
  Pontil]{paredes2012exploiting}
B.~R. Paredes, A.~Argyriou, N.~Berthouze, and M.~Pontil.
\newblock Exploiting unrelated tasks in multi-task learning.
\newblock In \emph{Artificial intelligence and statistics}, pages 951--959.
  PMLR, 2012.

\bibitem[Polson and Scott(2011)]{polson2011data}
N.~G. Polson and S.~L. Scott.
\newblock Data augmentation for support vector machines.
\newblock \emph{Bayesian Analysis}, 6\penalty0 (1):\penalty0 1--23, 2011.

\bibitem[Ruder(2017)]{ruder2017overview}
S.~Ruder.
\newblock An overview of multi-task learning in deep neural networks.
\newblock \emph{arXiv preprint arXiv:1706.05098}, 2017.

\bibitem[Thrampoulidis et~al.(2020)Thrampoulidis, Oymak, and
  Soltanolkotabi]{thrampoulidis2020theoretical}
C.~Thrampoulidis, S.~Oymak, and M.~Soltanolkotabi.
\newblock Theoretical insights into multiclass classification: A
  high-dimensional asymptotic view.
\newblock \emph{Advances in Neural Information Processing Systems},
  33:\penalty0 8907--8920, 2020.

\bibitem[Tiomoko et~al.(2021{\natexlab{a}})Tiomoko, Couillet, and
  Pascal]{tiomoko2021pca}
M.~Tiomoko, R.~Couillet, and F.~Pascal.
\newblock Pca-based multi task learning: a random matrix approach.
\newblock \emph{arXiv preprint arXiv:2111.00924}, 2021{\natexlab{a}}.

\bibitem[Tiomoko et~al.(2021{\natexlab{b}})Tiomoko, Tiomoko, and
  Couillet]{tiomoko2021deciphering}
M.~Tiomoko, H.~Tiomoko, and R.~Couillet.
\newblock Deciphering and optimizing multi-task learning: a random matrix
  approach.
\newblock In \emph{ICLR 2021-9th International Conference on Learning
  Representations}, 2021{\natexlab{b}}.

\bibitem[Tipping and Bishop(1999)]{tipping1999probabilistic}
M.~E. Tipping and C.~M. Bishop.
\newblock Probabilistic principal component analysis.
\newblock \emph{Journal of the Royal Statistical Society: Series B (Statistical
  Methodology)}, 61\penalty0 (3):\penalty0 611--622, 1999.

\end{thebibliography}

\onecolumn
\appendix

\section{Setting and main result}
We summarize here the general setting and main results of the paper.
We consider $T$ tasks, where task $ t$ consists in classifying $ N_t$ data points in $ \R^D$ that belong to two different Gaussian clusters with the same covariance $ \sigma_t^2 I_D$.
The dataset of each task is partially labeled.
The model is studied in the high dimensional setting $ D \ra \infty$ with the following parameters supposed to be known:
\begin{itemize}
    \item $ \vec C = (C_{tt'})_{t,t'=1}^T$: task correlations, with $C_{tt}=1$ for all $t$.
    \item $ \alpha_t = \lim_{D \ra \infty}  N_t/D$: oversampling ratios
    \item $ \lambda_t = 1/\sigma_t^2$: signal-to-noise ratios (SNRs)
    \item $ \eta_t$: proportion of labeled data in task $ t$
\end{itemize}
We are interested in the minimal probability of misclassifying a new data point in task $ t$, i.e. the Bayes risk of task $ t$.

\begin{result} \label{theo-mains}
Under the setting of the model, as $ D \ra \infty$, the Bayes risk of task $ t$ converges to
\begin{align*}
   1-\Phi \round{\sqrt{q_{ut}}},
\end{align*}
where $\Phi(t)=\frac1{\sqrt{2\pi}}\int_{-\infty}^t e^{-x^2}dx$ and $ (q_{ut}, q_{vt})_{t=1}^T$ is the stable solution of the system of equations
\begin{subequations}
    \begin{align}
    q_{ut} &= [\vec M-\vec M(\vec I+\vec D\vec M)^{-1}]_{tt} \label{eq:fixed-point-1s}\\
    q_{vt} &= \eta_t + (1-\eta_t)F(q_{ut}) \label{eq:fixed-point-2s}
    \end{align}
\end{subequations}
with
\begin{align*}
    \vec M &= \br{ C_{tt'}/\sigma_t \sigma_{t'} }_{t,t'=1}^T\\
    \vec D &= \diag \bro{ \alpha_t q_{vt}}_{t=1}^T \\
    F(q) &= \E[\tanh(\sqrt{q}Z + q)], \quad Z \sim \mathcal N(0,1).
\end{align*}
\end{result}

\section{Special cases} \label{special}
We check the main result with the following special cases.

\subsection{Uncorrelated tasks}
We consider here the case in which $ C_{tt'}=0$ for all $ t \neq t'$, the matrix $\vec M$ is diagonal and we obtain the following equations for each $ t$
\begin{align*}
    q_{ut} &= \frac{1}{\sigma_1^2} \frac{\alpha_t q_{vt}}{\sigma_t^2 + \alpha q_{vt}} \\
    q_{vt} &= \eta_t + (1-\eta_t)F(q_{ut})
\end{align*}
which is the same as the fixed point equations when the tasks are learned separately.

\subsection{The data for each task follows the same distribution}

We consider here the case in which $C_{tt'}=1$ and $\sigma_t = \sigma$ for all $ t,t' = 1 \ddd T$. We have
\begin{align*}
    \vec M = \frac{1}{\sigma^2} \1 \1^T, \quad \vec D \vec M = \frac{\vec u \1^T}{\sigma^2}
\end{align*}
where $ \vec u = (\alpha_t q_{vt})_{t=1}^\top$ and $ \1 = \underbrace{(1 \ddd 1)^\top}_{T \text{ } 1s}$. Applying the formula
\begin{align*}
    (\vec I + \vec u \vec v^T)^{-1} = \vec I - \frac{\vec u \vec v^T}{1+\vec u^T \vec v}
\end{align*}
for $ \vec v = \1/\sigma^2$, we obtain
\begin{align*}
    (\vec I+\vec D \vec M)^{-1} = \vec I - \frac{\vec u\1^T}{\sigma^2+\vec u^T \1}
\end{align*}
so
\begin{align}
    \vec M - \vec M(\vec I + \vec D \vec M)^{-1} = \frac{1}{\sigma^2} \frac{\vec u^T \1}{\sigma^2+\vec u^T \1} \1\1^T
\end{align}
It follows from the equation (\ref{eq:fixed-point-1s}) that for all $t$,
\begin{align}\label{jn1}
    q_{ut} =  \frac{1}{\sigma^2} \frac{\vec u^T \1}{\sigma^2+\vec u^T \1} := q_u
\end{align}
Define $ \alpha, \eta$  as
\begin{align}\label{003}
    \alpha = \sum_t \alpha_t, \quad \alpha \eta = \sum_t \alpha_t \eta_t
\end{align}
We have
\begin{align}\label{jn2}
    \vec u^T \1 &= \sum_{t} \alpha_t q_{ut} \nn
    &= \sum_t \alpha_t (\eta_t +(1-\eta_t) F(q_u) ) \nn
    &= \alpha \eta + \alpha(1-\eta) F(q_u) \nn
    &= \alpha q_v
\end{align}
where $q_v$ is defined as
\begin{align}\label{001}
    q_v = \eta + (1-\eta) F(q_u)
\end{align}
then from (\ref{jn1}) and (\ref{jn2}), we have
\begin{align}\label{002}
    q_u &= \frac{1}{\sigma^2}\frac{\alpha q_v}{\sigma^2+\alpha q_v}
\end{align}
Since (\ref{001}) and (\ref{002}) are exactly the equations for the case of single task learning with parameters $ \alpha$ and $ \eta$, the multitask learning problem is reduced to one single task with parameters $ \alpha, \eta$ given by (\ref{003}).

\section{Unsupervised learning and phase transition}
\subsection{Region of impossible recovery}

In the unsupervised case, the fixed point equations are
\begin{subequations}\label{005}
    \begin{align}
    q_{ut} &= [\vec M-\vec M(\vec I+\vec D\vec M)^{-1}]_{tt} \label{jd}\\
    q_{vt} &= F(q_{ut})
    \end{align}
\end{subequations}
which always admits $ (\vec q_u, \vec q_v)=(\vec 0, \vec 0)$ as solution.  The classification is impossible if and only if this solution is stable. To analyze the stability of (\ref{005}) around zero, let $ q_{ut}, q_{vt} = O(h)$ where $ h \ra 0$. For vectors $A$ and $B$ of the same dimension, we denote $ A \simeq B$ if $ |A-B| \simeq O(h^2)$, where $ |\,.\,|$ denotes the Euclidean norm. From 
\begin{align}
    F(q) = \E[\tanh(\sqrt{q}Z + q)],
\end{align}
(Appendix \ref{ra}), using the Taylor expansion $ \tanh(x) = x - x^3/3 + o(x^3)$, we get
\begin{align*}
    q_{vt} = F(q_{ut}) \simeq q_{ut} 
\end{align*}
On the other hand,
\begin{align*}
    q_{ut} &= [\vec M-\vec M(\vec I+\vec D\vec M)^{-1}]_{tt} \\
    &\simeq [\vec M-\vec M(\vec I-\vec D\vec M)]_{tt} \\
    &= [\vec M \vec D \vec M]_{tt} \\
    & = \sum_{s=1}^T M_{ts}^2 \alpha_{s} q_{vs}
\end{align*}
Let
\begin{align}
    \vec P = (M_{ts}^2 \alpha_{s})_{s,t=1}^T  = \bround{ \frac{C_{ts}^2}{\sigma_t^2 \sigma_{s}^2} \alpha_{s} }_{s,t=1}^T = ( \lambda_s \lambda_t C_{st}^2 \alpha_s )_{s,t=1}^T
\end{align}
In a small neighborhood of $ (\vec 0, \vec 0)$, the system of equations can be approximated up to an error of $O(h^2)$ by
\begin{align}
    \vec q_{v} &= \vec q_{u}\\
    \vec q_{u} &= \vec P \vec q_v
\end{align}
Therefore the fixed point $(\vec 0, \vec 0)$ is stable if and only if the module of each eigenvalue of $ \vec P$ is not larger than $ 1$. Using the property that $AB$ and $BA$ has the same eigenvalues for general square matrices $ A, B$, the matrix $ \vec P$ has the same eigenvalues as the following symmetric matrix
\begin{align}
    \vec R = ( \sqrt{\alpha_s \alpha_t} \lambda_s \lambda_t C_{st}^2 )_{s,t=1}^T
\end{align}
Note that $ \vec R$ is a positive semidefinite (p.s.d) matrix, since it can be written as Hadamard product of p.s.d. matrices. Therefore, the classification is impossible if and only if all eigenvalues of $ \vec R$ are not greater than $ 1$. 

When $ C_{tt'}=c$ for all $ t \neq t'$ and $ \lambda_t=\lambda, \alpha_t=1$ for all $ t$, we have
\begin{align}
    \vec R = \lambda^2(c^2 \1\1^T + (1-c^2)I ) 
\end{align}
Note that the matrix $ \1\1^T$ has eigenvalues $ 0\ddd 0, T$, so the largest eigenvalue of $ \vec R$ is $ \lambda^2(1+(T-1)c^2)$, from with we obtain the condition for impossible classification
\begin{align}
    \lambda^2(1+(T-1)c^2) \leq 1
\end{align}
which becomes $ \lambda \leq 1$ for the special case $ T=1$.

When $ T=2$ with task correlation $ c$ and $ \alpha_1=\alpha_2=1$, we have
\begin{align}
    \vec R = \pmat{ \lambda_1^2 & c^2 \lambda_1 \lambda_2 \\ c^2 \lambda_1 \lambda_2 & \lambda_2^2 }
\end{align}
It is clear that the $ (\lambda_1, \lambda_2)$-domain of impossible classification is a subset of $ [0,1]^2$, otherwise at least one task is achievable. All eigenvalues of $ \vec R$ are less than 1 if and only if $ \tr(\vec I-\vec R) \geq 0$ and $ \det(\vec I - \vec R) \geq 0$. The first condition is already satisfied for $ (\lambda_1, \lambda_2) \in [0,1]^2$ while the second condition is equivalent to
\begin{align}
    (1-\lambda_1^2)(1-\lambda_2^2) \leq c^4 \lambda_1^2 \lambda_2^2
\end{align}

\subsection{Connected tasks are either all feasible or impossible}
In the unsupervised case, tasks are considered connected if any two tasks are directly or indirectly correlated through other tasks. We will prove that if tasks are connected, then either all tasks are feasible or all tasks are impossible. As a reminder, for any task $t$, the value of $q_{ut}$ is always non-negative. If $q_{ut} = 0$, then the task $t$ is impossible; otherwise, it is feasible.

Consider $T$ Gaussian channels with outputs $(Y_t)_{t=1}^T$, signals $(X_t)_{t=1}^T$ having joint distribution $\mathcal{N}(0, \vec M)$ and independent standard Gaussian noises. The SNRs for each channel are $(\alpha_t q_{vt})_{t=1}^T$. Then the right-hand side of (\ref{jd}) corresponds to the overlap between the signal $X_t$ and its MMSE estimator (Appendix \ref{gaussian-signals}).

Suppose by contradiction that the tasks can be split into non-empty sets such $S$ and $S'$ such that $q_{ut}=0$ for all $t \in S$ while $q_{ut} > 0$ for all $t \in S'$. Since the tasks are connected, there exists correlated tasks $t,t'$ such that $t \in S, t' \in S'$. Therefore, there exists $t,t'$ such that $q_{ut}=0, q_{ut'}>0 $ and $C_{tt'} \neq 0$. 

Since $\E[X_t X_{t'}] = M_{tt'} = C_{tt'}/(\sigma_t \sigma_{t'}) \neq 0$, $X_t$ is correlated with $X_{t'}$. Moreover, as $ q_{vt'} = F(q_{ut'})$ and $ q_{ut'} > 0$, we have $ q_{vt'} > 0$. Therefore $X_t$ is not independent of $\vec Y = \br{\sqrt{\alpha_s q_{vs}} X_s + Z_s}_{s=1}^T$, leading to $q_{ut} = \E[X_t \E[X_t|\vec Y]] > 0$, a contradiction.

\section{Estimating model parameters from data} \label{C}
Although it is assumed that the model parameters $ \vec C$ and $ (\sigma_t)$ are available for the analysis, we show here that they can indeed be estimated with vanishing errors as $ D \ra \infty$, given that a positive fraction of labeled data is available in each task, i.e. $ \eta_t > 0$ for all $ t$. First consider the supervised learning case. Let
\begin{align}
    \bar{\vec Y}_t = \frac{1}{N_t} \sum_{i=1}^{N_t} V_{ti} \vec Y_{ti}
\end{align}
Then we have
\begin{align}\label{eq:ybar}
    \bar{\vec Y}_t = \vec U_t + \sqrt{\frac{\sigma_t^2}{N_t}} \bar{\vec Z}_t
\end{align}
where
\begin{align}
    \bar{\vec Z}_t = \frac{1}{\sqrt{N_t}} \sum_{i=1}^{N_t} V_{ti} \vec Z_{ti}
\end{align}
It is clear that $ \bar{\vec Z}_t \iid \mathcal N(0, I_D)$ for $ t = 1 \ddd T$.
\noindent We consider the following estimator of $ C_{tt'}$ for $ t \neq t'$:
\begin{align}
    \hat C_{tt'} = \innero{\bar{\vec Y}_t, \bar{\vec Y}_{t'}}
\end{align}
Insert (\ref{eq:ybar}) into the definition of $ \hat C_{tt'}$ and use the fact that $ \innero{ \bar{\vec Z}_t, \bar{\vec Z}_{t'} }=O(\sqrt{D})$, $ \innero{ \bar{\vec U}_t, \bar{\vec Z}_{t'} }=O(1)$, which are direct consequences of Central Limit Theorem, we obtain $ \hat C_{tt'} = C_{tt'} + O(D^{-1/2}) $. Moreover
\begin{align}
    \normo{\bar{\vec Y}_t}^2 = 1 + \frac{\sigma_t^2}{\alpha_t} + O(D^{-1/2}),
\end{align}
from which $ \sigma_t$ can also be estimated.

In the case where the proportion of labeled data is positive for all tasks, we can restrict the above estimators on the labeled data and obtain the approximate values of $ \vec C$ and $ (\sigma_t)$ with errors converging to zero when $ D \ra \infty$.

\section{Simple Gaussian channels}\label{sim}
\subsection{Rademacher signal.}\label{ra}
Consider the Gaussian channel given by
\begin{align}
Y = \sqrt{\lambda} X + Z,
\end{align}
where the Rademacher signal $X$ takes values of $1$ and $-1$ with equal probabilities and the standard Gaussian noise $Z$ is independent of $X$. We have
\begin{align}
    P(x|Y) &= \frac{P(x)P(Y|x)}{P(Y)} \nn
    &\propto e^{-(Y-\sqrt{\lambda}x)^2/2} \nn
    &\propto e^{\sqrt{\lambda}Yx},
\end{align}
from which we obtain the posterior distribution as
\begin{align}
    P(x|Y) = \frac{e^{\sqrt{\lambda}Yx}}{2 \cosh(\sqrt{\lambda} Y)}
\end{align}
and the MMSE estimator $\hat X_{\text{MMSE}} = \E[X|Y]$ as
\begin{align}
    \hat X = \sum_{x=\pm 1} x P(x|Y) = \tanh (\sqrt \lambda Y).
\end{align}
The overlap between the MMSE estimator and the signal is therefore
\begin{align}
    \E[X \hat X_{\text{MMSE}}] &=  \E[X \tanh(\sqrt \lambda (\sqrt{\lambda}X+Z)]  \nn
    &= \frac{1}{2}\E[\tanh( \lambda + \sqrt{\lambda}Z )] - \frac{1}{2}\E[\tanh( -\lambda + \sqrt{\lambda}Z )] \nn
    &= \frac{1}{2}\E[\tanh( \lambda + \sqrt{\lambda}Z )] - \frac{1}{2}\E[\tanh( -\lambda - \sqrt{\lambda}Z )] \nn
    &= \E[\tanh(\sqrt{\lambda} Z + \lambda)]
\end{align}
Next, the error $\mathbb P(\hat X \neq X)$ for any estimator $\hat X$ of $X$ is minimized by the maximum-likelihood estimator:

\begin{align}
\hat X_{\text{ML}} &= \argmax_{x = \pm 1} P(x,Y) \nn
&= \argmax_{x = \pm 1} e^{\sqrt \lambda Y x}
\end{align}

This gives us the maximum-likelihood estimator as:

\begin{align}
\hat X_{\text{ML}} = \text{sgn}(Y).
\end{align}
The Bayes risk is therefore
\begin{align*}
    \mathbb P(X \neq \hat X_{ML}) &= \frac{1}{2} \mathbb P(X=1, \hat X_{ML}=-1) + \frac{1}{2} \mathbb P(X=-1, \hat X_{ML}=1) \\
    &= \frac{1}{2} \mathbb P(X=1, Y<0) + \frac{1}{2} \mathbb P(X=-1, Y>0) \\
    &= \mathbb P(X=-1, Y>0) \\
    &= \mathbb P(Z > \sqrt{\lambda})
\end{align*}

\subsection{Correlated Gaussian signals} \label{gaussian-signals}

Consider $T$ Gaussian channels, where the signals $X_1,\dots,X_T$ have a joint distribution of $\mathcal{N}(0,\vec M)$ and are independent of Gaussian noises $Z_1,\dots,Z_T$ that are independently distributed as $\mathcal{N}(0,1)$. Specifically, we have:
\begin{align*}
Y_t = \sqrt{\lambda_t} X_t + Z_t, \quad t = 1,\dots,T.
\end{align*}

Let $ \hat{X}_t = \EE{ X | \vec Y}$ be the MMSE estimator for $ X_t$. Since $(X_t, Y_1 \ddd Y_T)$ is a Gaussian vector, $ \hat X_t$ is a linear combination of $ Y_1 \ddd Y_T$. Therefore
\begin{align*}
    \text{MMSE}_t &:= \E[(X_t-\hat X_t)^2]  \\
    &= \min_{\vec \beta_t \in \R^T} \EE{(X_t - \innero{\vec \beta_t, \vec Y} )^2}.
\end{align*}
This can be written as a quadratic optimization problem
\begin{align*}
    \text{MMSE}_t = \min_{\vec \beta_t \in \R^T} \br{ M_{tt} - 2 \vec a_t^T \vec \beta_t + \vec \beta_t^T \vec A \vec \beta_t  }
\end{align*}
with
\begin{align*}
    \vec a_t &= \round{\EE{X_t Y_s}}_{s=1}^T = \round{\sqrt{\lambda_t} M_{ts}}_{s=1}^T = \vec D_{\vec \lambda}^{1/2} \vec M \vec e_t \\
    \vec A &= \round{\EE{ Y_s Y_{s'} }}_{s,s'=1}^T = \round{ \sqrt{\lambda_s \lambda_{s'}}  M_{ss'} + \delta_{ss'} }_{s,s'=1}^T = \vec I + \vec D_{\vec \lambda}^{1/2} \vec M \vec D_{\vec \lambda}^{1/2}. \\
\end{align*}
This optimization problem admits a unique minimizer $ \vec \beta_t = \vec A^{-1} \vec a_t$, from which we obtain 
\begin{align}
    \hat{\vec X} &= \vec M \vec D_{\vec \lambda}^{1/2} (\vec I + \vec D_{\vec \lambda}^{1/2} \vec M \vec D_{\vec \lambda}^{1/2})^{-1} \vec Y\\
    \text{MMSE}_t &= [\vec M (\vec I + \vec D_{\vec \lambda} \vec M)^{-1}]_{tt} \\
    \E[X_t \hat X_t] &= [\vec M - \vec M (\vec I + \vec D_{\vec \lambda} \vec M)^{-1}]_{tt}. \label{eq:overlap-2}
\end{align}

\section{The uniform prior is asymptotically Gaussian}\label{prior}
To generate $(\vec U_1, \dots, \vec U_T)$ according to the prior distribution specified in the model, we follow these steps:
\begin{enumerate}
    \item Generate $\vec Z_1 \ddd \vec Z_T \iid \mathcal N(\vec 0, I_D)$.
    \item Orthonormalize $\vec Z_1 \ddd \vec Z_T$ using Gram-Schmidt process, we obtain orthonormal vectors $\vec S_1 \ddd \vec S_T$
    \item $(\vec U_1 \ddd \vec U_T) = (\vec S_1 \ddd \vec S_T) \vec C^{1/2}$, where $(\vec U_1 \ddd \vec U_T)$ denotes the $D \times T$ matrix with columns $\vec U_1 \ddd \vec U_T$.
\end{enumerate}
In the high dimensional limit, the vector $\vec Z_1 \ddd \vec Z_T$ are asymptotically orthogonal, so the orthonormalizing step produces approximately $ n^{-1/2}(\vec Z_1 \ddd \vec Z_T)$, which implies that if $\vec W_t = \sqrt D \vec U_t$, then $\vec W_t$'s are asymptotically Gaussian with covariance
\begin{align}
    \E[W_{ti} W_{t'j}] = \delta_{ij} C_{tt'}
\end{align}
It is worth noting that this is a direct consequence of the equivalence between the canonical and microcanonical ensembles in statistical physics.

\end{document}